\documentclass[twoside]{article}

\usepackage[preprint]{aistats2027}
\usepackage[round]{natbib}
\renewcommand{\bibname}{References}
\renewcommand{\bibsection}{\subsubsection*{\bibname}}

\usepackage{hyperref}
\usepackage{xurl}
\usepackage{mathtools}
\usepackage{amsmath, amssymb, amsthm}
\usepackage[ruled]{algorithm2e}
\usepackage{bm, dsfont}
\usepackage{bbm}
\usepackage{booktabs}
\usepackage{multirow}
\usepackage{caption}
\usepackage[labelformat=simple]{subcaption}
\usepackage{xcolor}
\usepackage{siunitx}
\newtheorem{definition}{Definition}
\newtheorem{theorem}{Theorem}
\newtheorem{lemma}{Lemma}

\newtheorem{corollary}{Corollary}

\mathtoolsset{showonlyrefs, showmanualtags}

\DeclareMathOperator{\cN}{\mathcal{N}}

\DeclareMathOperator{\vzero}{\boldsymbol{0}}

\begin{document}

\runningtitle{Generalized Deteminantal Point Processes for Attraction and Repulsion}
\twocolumn[

\aistatstitle{Generalized Determinantal Point Processes for Attraction and Repulsion with Distinct Interaction Kernels}

\aistatsauthor{Hideaki Ishibashi$^\ast$ \And Takahiro Kawashima$^\ast$ \And Hideitsu Hino }

\aistatsaddress{Kyushu Institute of Technology \And ZOZO Research \And Institute of Statistical Mathematics \\ Waseda University} ]

\begingroup
  \renewcommand\thefootnote{}
  \footnote{$^\ast$Equal contribution.}
\endgroup
\begin{abstract}
Determinantal point processes (DPPs) model subset selection probability with repulsion, but many applications also exhibit attraction arising from pairwise relations or shared latent structures. Nonsymmetric DPPs (NDPPs) capture both effects, but their kernels are difficult to interpret and specify. We propose the determinant-ratio point process (DRPP), which models repulsion and attraction via a ratio of the determinants of two symmetric positive-definite kernels. This formulation preserves the feature-based interpretation of standard DPPs while allowing the two effects to be specified separately. Furthermore, we introduce an auxiliary-variable representation that enables efficient MCMC sampling and yields tractable expectation formulas for key statistics including the normalizing constant, which together facilitate maximum-likelihood learning. We theoretically characterize the interaction structures well suited to DRPPs: for clustered attraction with $C$ latent groups, DRPPs exactly realize the target pairwise interactions with an attraction rank $C$, whereas NDPPs can require a rank that grows with the number of items given the same repulsion component. Conversely, nontransitive pair-specific attraction may require compensation through the DRPP repulsion kernel. Experiments support these predictions, showing that DRPPs fit transitive attraction with substantially fewer parameters than NDPPs.
\end{abstract}

\section{Introduction}

Determinantal point processes (DPPs)~\citep{Macchi1975,Kulesza2012} are probabilistic models for selecting low-redundancy subsets from a ground set, and they have been widely used in applications such as recommender systems~\citep{NEURIPS2018_dbbf603f}, document summarization~\citep{10.5555/3020548.3020597}, batch active learning~\citep{Biyik2019BatchAL}, and minibatch selection for stochastic gradient descent~\citep{NEURIPS2021_8744cf92}. Through the determinant of a positive semidefinite kernel matrix, a DPP assigns low probability to subsets of mutually similar items, thereby modeling repulsion between items.

However, many applications such as shopping basket completion also involve attraction, that is, a tendency for certain items to be selected together, which standard DPPs cannot express. Several models capable of expressing both attraction and repulsion have been proposed~\citep{kawashima2025,Zhang2021}, among which the nonsymmetric DPP (NDPP)~\citep{gartrell2019a} has been particularly widely studied as explained in the next section. An NDPP replaces the symmetric kernel of a DPP with a nonsymmetric matrix, which decomposes into a symmetric part and a skew-symmetric part; the latter gives rise to attraction. However, unlike the symmetric part, the skew-symmetric part cannot be interpreted as a Gram matrix of feature vectors. Consequently, it is difficult to specify the nonsymmetric matrix from prior knowledge and is typically learned from data.

In this work, we propose the determinant-ratio point process (DRPP), which models repulsion and attraction separately through a ratio of the determinants of two distinct kernel matrices. Items that are similar under the repulsion kernel are less likely to be selected together, as in a standard DPP, whereas items similar under the attraction kernel are more likely to be selected together. Because both kernels can be interpreted as similarities between items, attraction, like repulsion, can be specified directly from prior knowledge.

Although a DRPP has no closed-form normalizing constant, it retains many of the convenient properties of standard DPPs. Specifically, by introducing Gaussian latent variables, a DRPP can be represented as an infinite mixture of DPPs; conditioned on these latent variables, the distribution of subsets is exactly a DPP. This representation yields efficient, rejection-free Markov chain Monte Carlo (MCMC) samplers, namely, a block Gibbs sampler and a two-step sampler based on elliptical slice sampling~\citep{Murray2010}. Both samplers perform full-bit updates in a single Monte Carlo step. Furthermore, the resulting latent samples allow us to estimate DRPP statistics whose DPP counterparts have closed forms, such as the expected subset size, marginal probabilities, and the gradient of the log normalizing constant. Crucially, the estimated gradient facilitates maximum likelihood learning.

We further investigate, both theoretically and empirically, the interaction structures to which DRPPs are well suited. Theoretically, we show that when pairwise attraction arises from a small number of shared latent factors under uniform repulsion, a DRPP realizes the target interaction with an attraction rank equal to the number of factors. In contrast, an NDPP with the same repulsion component may require a skew-symmetric rank that grows with the size of the ground set. Conversely, the attraction modeled by a DRPP is inherently transitive: sufficiently strong attraction between items $i$ and $j$, and between $j$ and $k$, induces attraction between $i$ and $k$. Consequently, pair-specific, non-transitive attraction must be compensated for by the repulsion kernel, and we provide a lower bound on the required compensation. Experiments on synthetic and real data support these theoretical results. For transitive attraction, DRPPs achieve held-out likelihoods comparable to those of more flexible models with fewer parameters on synthetic data, and outperform NDPPs with fewer parameters on real data. 

\section{Related work}

A DPP on a ground set $\Omega=\{1,\dots,N\}$ is defined by an $N\times N$ symmetric positive semidefinite kernel matrix $\bm{L}$ as
\begin{equation}
\label{eq:dpp}
P(A; \bm{L})=\frac{\det(\bm{L}[A])}{\det(\bm{L}+\bm{I})}, \qquad \mathrm{for}~A\subseteq\Omega,
\end{equation}
where $\bm{L}[A]$ denotes the principal submatrix of $\bm{L}$ indexed by $A$ \citep{borodin2005}.
We also denote this $\mathrm{DPP}(A; \bm{L})$.
When $\bm{L}$ is the Gram matrix of feature vectors $\{\bm{\varphi}_i\}_{i\in\Omega}$, that is, $L_{ij}=\langle\bm{\varphi}_i,\bm{\varphi}_j\rangle$, $\det(\bm{L}[A])$ equals the squared volume spanned by $\{\bm{\varphi}_i\}_{i\in A}$~\citep{Kulesza2012}.
Writing $\bm{\varphi}_i=q_i\bm{b}_i$ with $\|\bm{b}_i\|_2=1$ yields the quality-diversity decomposition $\bm{L}=\bm{Q}\bm{R}\bm{Q}$, where $\bm{Q}=\mathrm{diag}(q_1,\dots,q_N)$ encodes the item quality and $\bm{R}=(\langle\bm{b}_i,\bm{b}_j\rangle)_{i,j\in\Omega}$ is a unit-diagonal similarity matrix.
Because the spanned volume shrinks as features become similar, similar items are less likely to be selected together. Consequently, standard DPPs can only model repulsion and cannot express positive correlations between items.

The nonsymmetric DPP (NDPP)~\citep{gartrell2019a} replaces the symmetric kernel with a nonsymmetric matrix consisting of a symmetric and a skew-symmetric part, allowing both attraction and repulsion. NDPPs retain tractable normalizing constants and marginals, enabling parameter learning~\citep{gartrell2019a,gartrell2020}, mode finding~\citep{gartrell2020,anari2022}, and sampling~\citep{poulson2020,han2022}. However, the skew-symmetric part lacks a natural Gram matrix interpretation, making it difficult to specify desired attraction strengths from prior knowledge. Recently, \citet{poinas2026} established well-definedness conditions for NDPPs and constructed couplings with intra-mark repulsion and inter-mark attraction, though how to control the range of attraction remains open.

Another line of work replaces the determinant defining DPP weights with other matrix functions to handle attraction and repulsion within a single family. \citet{Diaconis2000} proposed immanantal point processes, which use characters of the symmetric group to incorporate both the determinant (repulsion) and the permanent (attraction). $\alpha$-determinantal point processes~\citep{Shirai2003,Cunden2019} rely on the $\alpha$-determinant, whose scalar parameter $\alpha$ yields a broader family encompassing both determinants and permanents. Similarly, discrete kernel point processes (DKPPs)~\citep{kawashima2025} replace the determinant with a broader class of matrix functions, providing a unified family for attractive log-supermodular and repulsive log-submodular distributions. Since DKPPs use only symmetric kernel matrices, they can be specified directly from item features. However, all of these models govern attraction and repulsion globally, and thus cannot express mixed interaction structures with attraction between some pairs and repulsion between others.

Rather than switching between the determinant and the permanent, \citet{Katori2022} combined them in the permanental-determinantal point process, whose correlation functions are defined by the product of a permanent and a determinant. This product treats repulsion and attraction through separate factors, sharing our motivation of factorizing the two effects. However, computing the permanent is \#P-hard~\citep{Valiant1979}, and because both factors are built from a single kernel, repulsion and attraction are tied together. In contrast, we represent attraction through the inverse of a determinant, an idea also used in the determinantal clustering process~\citep{Shah2013}, which assigns $\det(\bm{L}[S])^{-\tau}$ to each block $S$ of a data partition to model within-cluster similarity. This model, however, deals only with attraction and does not consider repulsion.

Other approaches induce positive correlations by combining DPPs. Mixtures of low-rank DPPs~\citep{Gartrell2016}, introduced to improve expressive power, can induce positive dependence through heterogeneity across mixture components. Probabilistic generating circuits (PGCs)~\citep{Zhang2021} encode distributions via probability generating polynomials, with DPPs as special cases. By substituting other generating polynomials into the arguments of a DPP polynomial, the determinantal PGC (DetPGC) in the same work models arbitrary distributions within item groups while retaining DPP-based repulsion across groups, allowing positive dependence among within-group pairs. Although a DRPP can also be viewed as a continuous mixture of DPPs, its attraction is specified directly via an explicit attraction kernel, whereas in these approaches positive dependence arises only indirectly through component differences or within-group distributions, rather than being parameterized as a distinct interaction mechanism. 

\section{Determinant-Ratio Point Process (DRPP)}
\label{sec:def_drpp}

\subsection{Definition and Standard Form of DRPP}
On the ground set $\Omega$, we define the DRPP as follows.
\begin{definition}
Let $\bm{K}=(k_{ij})_{i,j\in\Omega}$ and $\bm{S}=(s_{ij})_{i,j\in\Omega}$ be $N\times N$ real symmetric positive definite matrices.
For each $A\subseteq\Omega$, define the unnormalized weight
\begin{equation}
\tilde{P}(A;\bm{K},\bm{S})=\frac{\det(\bm{K}[A])}{\det(\bm{S}[A])},
\label{eq:drpp}
\end{equation}
with $\det(\bm{K}[\emptyset])=\det(\bm{S}[\emptyset])=1$. The determinant-ratio point process (DRPP) is the distribution $P(A;\bm{K},\bm{S})=\tilde{P}(A;\bm{K},\bm{S})/Z(\bm{K},\bm{S})$, where $Z(\bm{K},\bm{S})=\sum_{A\subseteq\Omega}\tilde{P}(A;\bm{K},\bm{S})$ is the normalizing constant.
\end{definition}
The positive definiteness of $\bm{S}$ ensures that the denominator never vanishes, while that of $\bm{K}$ ensures that every subset has a positive probability. When $\bm{K}$ is the Gram matrix of item features, $\det(\bm{K}[A])$ is small for subsets containing mutually similar items under $\bm{K}$, so the numerator induces repulsion as in a standard DPP. Conversely, $\det(\bm{S}[A])$ is small for subsets of items similar under $\bm{S}$, so its inverse increases their probability and induces attraction. Setting $\bm{S}=\bm{I}$ recovers the standard DPP, and the DRPP thus extends the DPP by introducing attraction through $\bm{S}$.

Since $\det((\bm{D}\bm{K}\bm{D})[A])/\det((\bm{D}\bm{S}\bm{D})[A])=\det(\bm{K}[A])/\det(\bm{S}[A])$ for any positive diagonal matrix $\bm{D}$, the DRPP is invariant under a common diagonal scaling of $\bm{K}$ and $\bm{S}$. Thus, we can write, without loss of generality, $\bm{K}=\bm{Q}\bm{R}_K\bm{Q}$ and $\bm{S}=\bm{R}_S$, where $\bm{Q}=\mathrm{diag}(q_1,\dots,q_N)$ represents the item qualities, and $\bm{R}_K$ and $\bm{R}_S$ are positive definite matrices with unit diagonals.

In this work, we model $\bm{R}_K$ and $\bm{R}_S$ as
\begin{equation}
\bm{R}_K=\epsilon_K\bm{I}+\bm{V}\bm{V}^\top,\qquad\bm{R}_S=\epsilon_S\bm{I}+\bm{U}\bm{U}^\top,
\end{equation}
where $\bm{V}\in\mathbb{R}^{N\times r_K}$, $\bm{U}\in\mathbb{R}^{N\times r_S}$, and $0<\epsilon_K,\epsilon_S<1$. The $i$th rows $\bm{v}_i$ and $\bm{u}_i$ satisfy $\|\bm{v}_i\|_2^2=1-\epsilon_K$ and $\|\bm{u}_i\|_2^2=1-\epsilon_S$, ensuring that the unit-diagonal condition holds exactly.
Since $\bm{R}_K\succeq\epsilon_K\bm{I}$ and $\bm{R}_S\succeq\epsilon_S\bm{I}$, all principal submatrices are positive definite.
To remove the positivity constraint $q_i > 0$, we reparameterize item qualities as $\alpha_i=2\log q_i$, yielding the log unnormalized weights:
\begin{align}
&\log\tilde{P}(A;\bm{K},\bm{S})\\
&\quad=\sum_{i\in A}\alpha_i+\log\det(\bm{R}_K[A])-\log\det(\bm{R}_S[A]).
\label{eq:unnormalized_score}
\end{align}

\subsection{DRPP as Infinite Mixture of DPPs}
We derive an extended state space representation for DRPPs and show that a DRPP can be regarded as an infinite mixture of DPPs.
Since DPPs admit many closed-form expressions for key statistics, including expected cardinalities and marginal probabilities, this insight provides tractable expressions for these statistics under DRPPs.

Let us denote $\bm{M}^A \coloneq \epsilon^{-1}_S \sum_{j \in A} \bm{u}_j \bm{u}^\top_j$ and $\bm{L} \coloneq \epsilon_S^{-1} \bm{Q} \bm{R}_K \bm{Q}$, where $\bm{u}_j$ denotes the $j$th row vector of $\bm{U}$. Then, \eqref{eq:drpp} is written as
\begin{align}
    \label{eq:drpp_transformed}
    \tilde{P}(A) = \frac{\det(\bm{L} [A])}{\det(\bm{I} + \bm{M}^A)}.
\end{align}
Further, we introduce two random variables $\bm{z}_1, \bm{z}_2 \in \mathbb{R}^{r_S}$ and define the random kernel matrix
\begin{align}
    \label{eq:L_tilde}
    \tilde{\bm{L}}(\bm{z}_1, \bm{z}_2) \coloneq \bm{C}(\bm{z}_1, \bm{z}_2)^{\frac{1}{2}} \bm{L} \bm{C}(\bm{z}_1, \bm{z}_2)^{\frac{1}{2}},
\end{align}
where
\begin{align}
    \label{eq:C}
    \bm{C}(\bm{z}_1, \bm{z}_2) \coloneq \mathrm{diag}\left (\left \{ e^{-\frac{1}{2\epsilon_S}\sum^2_{\ell = 1} (\bm{u}^\top_i\bm{z}_\ell) ^2} \right \}_{i \in \Omega} \right ).
\end{align}

\begin{theorem}\label{thm:mixture}
    Let us consider the probability distribution 
    \begin{align}
        \label{eq:drpp_joint_1}
        P(A, \bm{z}_1, \bm{z}_2) 
        &\propto \det (\bm{L}[A])
         \prod^2_{\ell = 1}
         e^{-\frac{1}{2} \bm{z}^\top_\ell \bm{M}^A \bm{z}_\ell}
         \phi(\bm{z}_\ell)\\
        \label{eq:drpp_joint_2}
         &= \det (\tilde{\bm{L}}(\bm{z}_1, \bm{z}_2)[A]) \prod^2_{\ell = 1} \phi(\bm{z}_\ell),
    \end{align}
    where $\phi$ denotes the density of the $r_S$-variate standard Gaussian distribution.
    Then, marginalizing out $\bm{z}_1,\bm{z}_2$ yields the DRPP defined by~\eqref{eq:drpp_transformed}.
    Moreover, the marginal distribution over $\bm{z}_1$ and $\bm{z}_2$ is given by:
    \begin{align}
        \label{eq:Pz}
        P(\bm{z}_1, \bm{z}_2) \propto \det(\bm{I} + \tilde{\bm{L}}(\bm{z}_1, \bm{z}_2))
        \prod^2_{\ell = 1} \phi(\bm{z}_\ell).
    \end{align}
\end{theorem}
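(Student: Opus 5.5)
The proof rests on three facts: a Gaussian integral, the multilinearity of principal minors under diagonal scaling, and the standard DPP normalizer identity $\sum_{A}\det(\bm{X}[A])=\det(\bm{I}+\bm{X})$. I will verify the equality of \eqref{eq:drpp_joint_1} and \eqref{eq:drpp_joint_2}, then marginalize over $(\bm{z}_1,\bm{z}_2)$, and finally marginalize over $A$.

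\emph{Equality of the two joint forms.} Since $\bm{M}^A=\epsilon_S^{-1}\sum_{j\in A}\bm{u}_j\bm{u}_j^\top$, we have
\begin{align}
\prod_{\ell=1}^2 e^{-\frac12\bm{z}_\ell^\top\bm{M}^A\bm{z}_\ell}=\prod_{j\in A}e^{-\frac{1}{2\epsilon_S}\sum_{\ell=1}^2(\bm{u}_j^\top\bm{z}_\ell)^2}=\prod_{j\in A}C_{jj},
\end{align}
where $C_{jj}$ is the $j$th diagonal entry of $\bm{C}=\bm{C}(\bm{z}_1,\bm{z}_2)$ in \eqref{eq:C}. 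Because $\bm{C}$ is diagonal, $(\bm{C}^{1/2}\bm{L}\bm{C}^{1/2})[A]=\bm{C}[A]^{1/2}\bm{L}[A]\bm{C}[A]^{1/2}$. Taking determinants gives $\det(\tilde{\bm{L}}[A])=\det(\bm{L}[A])\prod_{j\in A}C_{jj}$, which is exactly the claimed equality.

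\emph{Marginalizing out $\bm{z}$.} Fix $A$ and integrate \eqref{eq:drpp_joint_1} over $\bm{z}_1,\bm{z}_2$. The integral factorizes into two identical Gaussian integrals:
\begin{align}
\int e^{-\frac12\bm{z}^\top\bm{M}^A\bm{z}}\phi(\bm{z})\,d\bm{z}=(2\pi)^{-r_S/2}\int e^{-\frac12\bm{z}^\top(\bm{I}+\bm{M}^A)\bm{z}}\,d\bm{z}=\det(\bm{I}+\bm{M}^A)^{-1/2}.
\end{align}
This is valid because $\bm{I}+\bm{M}^A\succ 0$, since $\bm{M}^A\succeq0$. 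The product of the two factors is $\det(\bm{I}+\bm{M}^A)^{-1}$, so the $A$-marginal is proportional to $\det(\bm{L}[A])/\det(\bm{I}+\bm{M}^A)$, which is \eqref{eq:drpp_transformed}. The proportionality constant does not depend on $A$, so normalizing gives exactly the DRPP.

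\emph{Marginalizing out $A$.} Sum \eqref{eq:drpp_joint_2} over $A\subseteq\Omega$. Since $\tilde{\bm{L}}(\bm{z}_1,\bm{z}_2)$ is symmetric positive semidefinite (a congruence of $\bm{L}$), the identity $\sum_{A}\det(\tilde{\bm{L}}[A])=\det(\bm{I}+\tilde{\bm{L}})$ applies. This is the DPP normalizer in \eqref{eq:dpp}, and it follows from expanding $\det(\bm{I}+\bm{X})$ multilinearly in its columns. Multiplying by $\prod_\ell\phi(\bm{z}_\ell)$ yields \eqref{eq:Pz}.

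The joint is a well-defined probability distribution: $\det(\bm{I}+\tilde{\bm{L}})\le\det(\bm{I}+\bm{L})$ because $C_{ii}\le 1$, so the total mass is finite. There is no serious obstacle. The only step needing care is the bookkeeping showing that the two Gaussian factors together give exactly one inverse power of $\det(\bm{I}+\bm{M}^A)$, which is why two latent vectors are needed. The identity $\det(\bm{I}+\bm{M}^A)=\det(\bm{R}_S[A])/\epsilon_S^{|A|}$ is only needed to connect back to \eqref{eq:drpp}, and the paper already takes it as given.
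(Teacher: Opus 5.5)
Your proof is correct and follows the paper's argument. The paper also obtains the $A$-marginal from the Gaussian identity $\det(\bm{I}+\bm{M}^A)^{-1}=\mathbb{E}_{\bm{z}_1,\bm{z}_2\sim\phi}[\exp(-\frac{1}{2}\sum_{\ell}\bm{z}_\ell^\top\bm{M}^A\bm{z}_\ell)]$ and the $\bm{z}$-marginal from the DPP normalizer $\sum_{A}\det(\tilde{\bm{L}}[A])=\det(\bm{I}+\tilde{\bm{L}})$, while your explicit checks that the two joint forms agree via the diagonal scaling and that the total mass is finite fill in steps the paper leaves implicit.
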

\begin{proof}
    Equation~\eqref{eq:drpp_joint_1} follows from the identity:
    \begin{align}
        \det(\bm{I} + \bm{M}^A)^{-1} = \mathbb{E}_{\bm{z}_1, \bm{z}_2 \sim \phi} \left [ \exp \left (-\frac{1}{2}  \sum^2_{\ell = 1} \bm{z}^\top_\ell \bm{M}^A \bm{z}_\ell \right ) \right].
    \end{align}
    Since the joint distribution \eqref{eq:drpp_joint_2} is a DPP with respect to $A$, summing over $A \subseteq \Omega$ yields \eqref{eq:Pz}.
\end{proof}
Conditioned on the auxiliary variables $(\bm{z}_1,\bm{z}_2)$, \eqref{eq:drpp_joint_2} is an ordinary DPP with the kernel $\tilde{\bm{L}}(\bm{z}_1, \bm{z}_2)$, since $P(A\mid \bm{z}_1, \bm{z}_2)\propto\det(\tilde{\bm{L}}(\bm{z}_1, \bm{z}_2)[A])$.
Thus, the probability function of the DRPP can be written as $P(A)=\int \mathrm{DPP}(A; \tilde{\bm{L}}(\bm{z}_1, \bm{z}_2))P(\bm{z}_1, \bm{z}_2)d\bm{z}_1 d\bm{z}_2$.
This leads to the key insight that a DRPP can be viewed as an infinite or continuous mixture of DPPs despite exhibiting attraction via $\bm{S}$.
Moreover, this property is highly advantageous for sampling, learning, and evaluating DRPPs.

\begin{corollary}
    \label{cor:Z_exp}
    The normalizing constant of the DRPP $Z = Z(\bm{K}, \bm{S})$ is
    \begin{align}
        Z = \mathbb{E}_{\bm{z}_1, \bm{z}_2 \sim \phi}[\det (\bm{I} + \tilde{\bm{L}}(\bm{z}_1, \bm{z}_2))].
    \end{align}
\end{corollary}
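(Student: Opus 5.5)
The plan is to show that the unnormalized weights $\tilde{P}(A;\bm{K},\bm{S})$ coincide exactly, not merely up to a constant, with the weights in \eqref{eq:drpp_transformed}. Then the Gaussian identity from the proof of Theorem~\ref{thm:mixture} can be applied term by term, and the sum over $A$ is exchanged with the expectation.

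\textbf{Step 1 (exact weights).} Write $\bm{U}_A$ for the rows of $\bm{U}$ indexed by $A$, so that $\bm{S}[A]=\epsilon_S\bm{I}_{|A|}+\bm{U}_A\bm{U}_A^\top$. By Sylvester's determinant identity,
\begin{align}
\det(\bm{S}[A])=\epsilon_S^{|A|}\det(\bm{I}_{r_S}+\epsilon_S^{-1}\bm{U}_A^\top\bm{U}_A)=\epsilon_S^{|A|}\det(\bm{I}+\bm{M}^A),
\end{align}
where the last equality uses $\bm{U}_A^\top\bm{U}_A=\sum_{j\in A}\bm{u}_j\bm{u}_j^\top$. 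Since $\bm{K}=\bm{Q}\bm{R}_K\bm{Q}$, we also have $\det(\bm{K}[A])\,\epsilon_S^{-|A|}=\det(\bm{L}[A])$. Hence $\tilde{P}(A;\bm{K},\bm{S})=\det(\bm{L}[A])/\det(\bm{I}+\bm{M}^A)$ with no extra factor, and this holds for $A=\emptyset$ too. This step is where care is needed: the claim is an equality for $Z$, so no constant may be dropped, unlike in the ``$\propto$'' statements of Theorem~\ref{thm:mixture}.

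\textbf{Step 2 (Gaussian identity).} For $\bm{z}\sim\phi$ and $\bm{M}^A\succeq 0$ we have $\mathbb{E}[e^{-\frac12\bm{z}^\top\bm{M}^A\bm{z}}]=\det(\bm{I}+\bm{M}^A)^{-1/2}$. Using two independent copies $\bm{z}_1,\bm{z}_2$ gives $\det(\bm{I}+\bm{M}^A)^{-1}$, which is exactly the identity quoted in the proof of Theorem~\ref{thm:mixture}.

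Next observe that
\begin{align}
e^{-\frac12\sum_{\ell}\bm{z}_\ell^\top\bm{M}^A\bm{z}_\ell}=\prod_{j\in A}e^{-\frac{1}{2\epsilon_S}\sum_\ell(\bm{u}_j^\top\bm{z}_\ell)^2}=\det(\bm{C}[A]),
\end{align}
and since $\bm{C}$ is diagonal, $\det(\bm{L}[A])\det(\bm{C}[A])=\det(\tilde{\bm{L}}[A])$. Therefore
\begin{align}
\tilde{P}(A;\bm{K},\bm{S})=\mathbb{E}_{\bm{z}_1,\bm{z}_2\sim\phi}[\det(\tilde{\bm{L}}(\bm{z}_1,\bm{z}_2)[A])].
\end{align}

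\textbf{Step 3 (sum over subsets).} Sum over the finitely many $A\subseteq\Omega$ and exchange the finite sum with the expectation; all terms are nonnegative and integrable. The standard DPP identity $\sum_{A}\det(\tilde{\bm{L}}[A])=\det(\bm{I}+\tilde{\bm{L}})$ then yields $Z=\mathbb{E}[\det(\bm{I}+\tilde{\bm{L}}(\bm{z}_1,\bm{z}_2))]$.

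This expectation is finite, since $\bm{C}\preceq\bm{I}$ gives $\det(\bm{I}+\tilde{\bm{L}})\le\det(\bm{I}+\bm{L})$. The same bound shows that the proportionality constant in \eqref{eq:Pz} is exactly $Z$.
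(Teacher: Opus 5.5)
Your proof is correct and follows the paper's route. Like the paper, you write each weight $\tilde{P}(A)$ as $\mathbb{E}_{\phi}[\det(\tilde{\bm{L}}(\bm{z}_1,\bm{z}_2)[A])]$ via the Gaussian identity $\det(\bm{I}+\bm{M}^A)^{-1}=\mathbb{E}[e^{-\frac12\sum_\ell \bm{z}_\ell^\top\bm{M}^A\bm{z}_\ell}]$, and then sum over $A$ using $\sum_A\det(\tilde{\bm{L}}[A])=\det(\bm{I}+\tilde{\bm{L}})$. Your Step 1 uses Sylvester's identity to check that \eqref{eq:drpp_transformed} equals $\tilde{P}(A;\bm{K},\bm{S})$ exactly, with no leftover $\epsilon_S^{|A|}$ factor; the paper asserts this without proof, and it is what turns the ``$\propto$'' of Theorem~\ref{thm:mixture} into the equality claimed here.
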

In practice, Corollary \ref{cor:Z_exp} provides a Monte Carlo estimator with reasonable variance for the normalizing constant of a DRPP when $\epsilon_S$ is not too small.
For small $\epsilon_S$, to reduce the variance, one can employ Chib's method \citep{chib1995} with the block Gibbs sampler introduced in Section \ref{sec:sampling}.

\begin{corollary}
    \label{cor:statistic}
    Suppose that $A$ is a random subset following $\mathrm{DRPP}(A; \bm{K}, \bm{S}) =  P(A; \bm{K}, \bm{S})$, and let $\tilde{\bm{K}}(\bm{z}_1, \bm{z}_2) \coloneq \tilde{\bm{L}}(\bm{z}_1, \bm{z}_2) (\bm{I} + \tilde{\bm{L}}(\bm{z}_1, \bm{z}_2))^{-1}$ denote the marginal kernel of $\mathrm{DPP}(\tilde{\bm{L}}(\bm{z}_1, \bm{z}_2))$.
    Then, for any statistic $g:2^{\Omega} \to \mathbb{R}$ satisfying $\mathbb{E}_{\mathrm{DRPP}(A; \bm{K}, \bm{S})}[|g(A)|] < \infty$, we have
    \begin{align}
        &\mathbb{E}_{\mathrm{DRPP}(A; \bm{K}, \bm{S})}[g(A)]\\
        \label{eq:statistic}
        &\quad = \mathbb{E}_{P(\bm{z}_1,\bm{z}_2)}
        \left[
            \mathbb E_{\mathrm{DPP}(\tilde{\bm{L}}(\bm{z}_1,\bm{z}_2))} [g(A) \mid \bm{z}_1, \bm{z}_2]
        \right ].
    \end{align}
\end{corollary}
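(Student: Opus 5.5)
The plan is to realize the DRPP as the $A$-marginal of the joint law in Theorem~\ref{thm:mixture} and then condition on the auxiliary variables (tower property). Write $\bm{z}=(\bm{z}_1,\bm{z}_2)$ and let $P(A,\bm{z})$ be the normalized version of \eqref{eq:drpp_joint_2}. Its normalizing constant is
\begin{equation}
\sum_{A}\int \det(\tilde{\bm{L}}(\bm{z})[A])\,\phi(\bm{z}_1)\phi(\bm{z}_2)\,d\bm{z}.
\end{equation}
By Tonelli (all terms are nonnegative) and the standard DPP identity $\sum_A\det(\tilde{\bm{L}}[A])=\det(\bm{I}+\tilde{\bm{L}})$, this equals $\mathbb{E}_{\phi}[\det(\bm{I}+\tilde{\bm{L}}(\bm{z}))]$. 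By Corollary~\ref{cor:Z_exp}, this is $Z<\infty$, after absorbing the common constant factor relating $\det(\bm{L}[A])$ and $\det(\bm{K}[A])/\det(\bm{S}[A])$. So the joint law is a genuine probability measure on $2^\Omega\times\mathbb{R}^{2r_S}$.

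Theorem~\ref{thm:mixture} gives the two marginals. The $A$-marginal is $\mathrm{DRPP}(A;\bm{K},\bm{S})$, and the $\bm{z}$-marginal is $P(\bm{z})\propto\det(\bm{I}+\tilde{\bm{L}}(\bm{z}))\phi(\bm{z}_1)\phi(\bm{z}_2)$. Dividing the joint by this marginal gives the conditional
\begin{equation}
P(A\mid\bm{z})=\frac{\det(\tilde{\bm{L}}(\bm{z})[A])}{\det(\bm{I}+\tilde{\bm{L}}(\bm{z}))}=\mathrm{DPP}(A;\tilde{\bm{L}}(\bm{z})).
\end{equation}
This is well defined for every $\bm{z}$: $\bm{C}(\bm{z})$ has strictly positive diagonal and $\bm{L}$ is positive definite, so $\tilde{\bm{L}}(\bm{z})$ is positive semidefinite and $\det(\bm{I}+\tilde{\bm{L}}(\bm{z}))\ge 1$. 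The marginal kernel $\tilde{\bm{K}}(\bm{z})$ is then just notation for this conditional DPP, useful when $g$ is a marginal indicator.

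Next, write
\begin{equation}
\mathbb{E}_{\mathrm{DRPP}}[g(A)]=\sum_A g(A)\int P(A,\bm{z})\,d\bm{z}=\int\sum_A g(A)\,P(A\mid\bm{z})\,P(\bm{z})\,d\bm{z}.
\end{equation}
The inner sum is exactly $\mathbb{E}_{\mathrm{DPP}(\tilde{\bm{L}}(\bm{z}))}[g(A)\mid\bm{z}]$, which yields \eqref{eq:statistic}.

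The only genuine point of care is exchanging the finite sum over $A$ with the integral over $\bm{z}$ when $g$ takes both signs. This is handled by Fubini. Applying Tonelli to $|g|$ gives
\begin{equation}
\int\sum_A|g(A)|\,P(A,\bm{z})\,d\bm{z}=\mathbb{E}_{\mathrm{DRPP}}[|g(A)|]<\infty,
\end{equation}
which is precisely the stated integrability assumption, so the exchange is justified. Because $\Omega$ is finite, $g$ is automatically bounded, so this step is essentially routine. I expect the main (mild) obstacle instead to be the bookkeeping of normalizing constants between \eqref{eq:drpp_joint_1} and the DRPP weights. This is settled by Theorem~\ref{thm:mixture} together with Corollary~\ref{cor:Z_exp}.
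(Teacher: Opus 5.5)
Your proposal is correct and follows the paper's route. The paper gives no separate proof; it treats the corollary as an immediate consequence of the mixture representation $P(A)=\int \mathrm{DPP}(A;\tilde{\bm{L}}(\bm{z}_1,\bm{z}_2))\,P(\bm{z}_1,\bm{z}_2)\,d\bm{z}_1 d\bm{z}_2$ from Theorem~\ref{thm:mixture} combined with the tower property, which is exactly your argument. Two small remarks: no ``common constant factor'' needs absorbing, since \eqref{eq:drpp_transformed} is exact and the Gaussian identity gives the joint total mass exactly $Z$; and because $\Omega$ is finite, the sum--integral exchange is plain linearity, so Fubini is not needed.
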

Corollary \ref{cor:statistic} states that any integrable set statistic under the DRPP can be evaluated by averaging the corresponding conditional DPP expectation over the latent mixing distribution $P(\bm{z}_1,\bm{z}_2)$.
For some major statistics admitting analytical expectation formulas in DPPs, \eqref{eq:statistic} reduces to an expectation over $P(\bm{z}_1, \bm{z}_2)$.
For example, we have
\begin{align*}
    \mathbb{E}|A| &= \mathbb{E}_{P(\bm{z}_1,\bm{z}_2)}
    \left[
        \mathrm{tr} \tilde{\bm{K}}(\bm{z}_1,\bm{z}_2)
    \right],\\
    \mathrm{Var}|A|
    &=
    \mathbb{E}_{P(\bm{z}_1,\bm{z}_2)}
    \left[
        \mathrm{tr}\left( \tilde{\bm{K}}(\bm{z}_1, \bm{z}_2) - \tilde{\bm{K}}(\bm{z}_1, \bm{z}_2)^2 \right)
    \right] \\
    &\quad +
    \mathrm{Var}_{P(\bm{z}_1,\bm{z}_2)}
    \left[
        \mathrm{tr} \tilde{\bm{K}}(\bm{z}_1, \bm{z}_2)
    \right].
\end{align*}
Other statistics, such as marginal inclusion probabilities, can be obtained in similar expectation forms.

Assuming i.i.d.$\!$ samples can be drawn from $P(\bm{z}_1, \bm{z}_2)$, evaluating the inner conditional expectation analytically in \eqref{eq:statistic} corresponds to Rao-Blackwellization from a computational statistics perspective, thereby yielding low-variance Monte Carlo estimates.
Although generating exact i.i.d.$\!$ samples from $P(\bm{z}_1, \bm{z}_2)$ is generally intractable, we can construct efficient MCMC samplers for $P(\bm{z}_1, \bm{z}_2)$ and DRPPs, as discussed in the following section.

\begin{algorithm}[t]
    \SetKwComment{Comment}{//}{}
    \caption{Block Gibbs Sampler}\label{alg:gibbs}
    \KwIn{Repulsive kernel $\bm{K} = \bm{Q} \bm{R}_K \bm{Q}$, attractive kernel $\bm{S} = \epsilon_S \bm{I} + \bm{U} \bm{U}^\top$, and size constraint $k$ (optionally)}
    \KwOut{$A^1, \ldots, A^T$}
    Initialize $\bm{z}^0_1, \bm{z}^0_2 \in \mathbb{R}^{r_S}$\;
    \For{$t = 1$ to $T$}{
        Compute $\tilde{\bm{L}}(\bm{z}^{t-1}_1, \bm{z}^{t-1}_2)$ from \eqref{eq:L_tilde}\;
        \eIf{$k$ specified}{
            Sample $A^t \sim k {\text -} \mathrm{DPP}(\tilde{\bm{L}}(\bm{z}^{t-1}_1, \bm{z}^{t-1}_2))$\;
        }{
            Sample $A^t \sim \mathrm{DPP}(\tilde{\bm{L}}(\bm{z}^{t-1}_1, \bm{z}^{t-1}_2))$\;
        }
        Compute $\bm{M}^{A^t} = \epsilon^{-1}_S \sum_{j \in A^t}\bm{u}_j \bm{u}^\top_{j}$\;
        Sample $\bm{z}^t_\ell \sim \cN(\vzero, (\bm{I} + \bm{M}^{A^t})^{-1})$ for $\ell = 1,2$\;
    }
\end{algorithm}

\begin{algorithm}[t]
    \SetKwComment{Comment}{//}{}
    \caption{Two-step Sampler}\label{alg:2step}
    \KwIn{Repulsive kernel $\bm{K} = \bm{Q} \bm{R}_K \bm{Q}$, attractive kernel $\bm{S} = \epsilon_S \bm{I} + \bm{U} \bm{U}^\top$, and size constraint $k$ (optionally)}
    \KwOut{$A^1, \ldots, A^T$}
    Initialize $\bm{z}^0_1, \bm{z}^0_2 \in \mathbb{R}^{r_S}$\;
    Compute $\tilde{\bm{L}}(\bm{z}^0_1, \bm{z}^0_2)$ from \eqref{eq:L_tilde}\;
    \For{$t = 1$ to $T$}{
        \eIf{$k$ specified}{
            Sample $\bm{z}^t_1, \bm{z}^t_2$ from \eqref{eq:Pz_k} by ESS\;
            Update $\tilde{\bm{L}}(\bm{z}^t_1, \bm{z}^t_2)$\;
            Sample $A^t \sim k {\text -} \mathrm{DPP}(\tilde{\bm{L}}(\bm{z}^t_1, \bm{z}^t_2))$\;
        }{
            Sample $\bm{z}^t_1, \bm{z}^t_2$ from \eqref{eq:Pz} by ESS\;
            Update $\tilde{\bm{L}}(\bm{z}^t_1, \bm{z}^t_2)$\;
            Sample $A^t \sim \mathrm{DPP}(\tilde{\bm{L}}(\bm{z}^t_1, \bm{z}^t_2))$\;
        }
    }
\end{algorithm}

\subsection{Sampling}\label{sec:sampling}
Let us consider the joint distribution of the DRPP on the extended state space given in \eqref{eq:drpp_joint_1}, \eqref{eq:drpp_joint_2}.
This joint distribution leads to the following conditional distributions:
\begin{align}
    \label{eq:conditional_z}
    &P(\bm{z}_1, \bm{z}_2 \mid A) \propto \prod^2_{\ell = 1} \cN(\bm{z}_\ell; \vzero, (\bm{I} + \bm{M}^A)^{-1}),\\
    \label{eq:conditional_A}
    &P(A \mid \bm{z}_1, \bm{z}_2) = \mathrm{DPP}(A; \tilde{\bm{L}}(\bm{z}_1, \bm{z}_2)).
\end{align}
These conditional distributions yield an efficient block Gibbs sampler (Algorithm \ref{alg:gibbs}).

In addition, we can design a two-step sampling scheme for $A$,
in which we first sample from $P(\bm{z}_1, \bm{z}_2)$ and then sample from $P(A \mid \bm{z}_1, \bm{z}_2)$.
Since \eqref{eq:Pz} is the product of zero-mean Gaussians and a computable function of $\bm{z}_1, \bm{z}_2$, the elliptical slice sampler (ESS) \citep{Murray2010} can be employed to sample efficiently from $P(\bm{z}_1, \bm{z}_2)$.
The procedure of this two-step sampler is summarized in Algorithm \ref{alg:2step}.
Both samplers are rejection-free and perform full-bit updates in each Monte Carlo step.

The ESS for $P(\bm{z}_1, \bm{z}_2)$ is beneficial not only for sampling $A$ but also for evaluating the statistics in Corollary \ref{cor:statistic}.
In the next section, we show that the ESS also enables efficient maximum likelihood estimation of DRPPs via stochastic optimization.

\subsection{Learning DRPPs}
\label{sec:learning}
Let $\boldsymbol{\theta}$ denote the DRPP parameters to be learned; typically $\boldsymbol{\theta} = (\boldsymbol{\alpha}, \bm{V}, \bm{U})$.
Our aim is to maximize the regularized log-likelihood:
\begin{align}
\label{eq:mle}
\!\!\!\!\ell(\boldsymbol{\theta}) = \frac{1}{M}\sum^M_{m=1} \log \tilde{P}(A_m ; \boldsymbol{\theta}) - \log Z(\boldsymbol{\theta}) - \mathcal{R}(\boldsymbol{\theta}),
\end{align}
where $\mathcal{R}(\boldsymbol{\theta})$ denotes the regularization term.
The main difficulty in maximizing~\eqref{eq:mle} is evaluating $\nabla_{\boldsymbol{\theta}} \log Z(\boldsymbol{\theta})$.
Here, Theorem \ref{thm:mixture} plays a key role as well.

\begin{corollary}\label{cor:dZ}
    It holds that
    $\nabla_{\boldsymbol{\theta}} \log Z(\boldsymbol{\theta}) = \mathbb{E}_{\bm{z}_1, \bm{z}_2 \sim P(\bm{z}_1, \bm{z}_2)}[\nabla_{\boldsymbol{\theta}} \log \det (\bm{I} + \tilde{\bm{L}}(\bm{z}_1, \bm{z}_2))].$
\end{corollary}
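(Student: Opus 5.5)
Starting from Corollary~\ref{cor:Z_exp}, we write the normalizing constant as
\begin{align}
Z(\boldsymbol{\theta})=\int F_{\boldsymbol{\theta}}(\bm{z}_1,\bm{z}_2)\,\phi(\bm{z}_1)\phi(\bm{z}_2)\,d\bm{z}_1 d\bm{z}_2,
\end{align}
where $F_{\boldsymbol{\theta}}(\bm{z}_1,\bm{z}_2)=\det(\bm{I}+\tilde{\bm{L}}(\bm{z}_1,\bm{z}_2))$. The important point is that the Gaussian reference density $\phi$ does not depend on $\boldsymbol{\theta}$. The parameters $(\boldsymbol{\alpha},\bm{V},\bm{U})$, and also $\epsilon_S$ if it is learned, enter only through $\tilde{\bm{L}}$. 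This happens via $\bm{L}=\epsilon_S^{-1}\bm{Q}\bm{R}_K\bm{Q}$ and via the diagonal weights $\bm{C}(\bm{z}_1,\bm{z}_2)$, whose entries $e^{-\frac{1}{2\epsilon_S}\sum_\ell(\bm{u}_i^\top\bm{z}_\ell)^2}$ depend smoothly on $\bm{U}$.

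The plan is to differentiate under the integral sign and then use the identity $\nabla F=F\,\nabla\log F$, which is valid because $F\ge 1>0$ (note $\tilde{\bm{L}}\succeq 0$). This gives
\begin{align}
\nabla_{\boldsymbol{\theta}}\log Z=\frac{1}{Z}\int F_{\boldsymbol{\theta}}\,\nabla_{\boldsymbol{\theta}}\log F_{\boldsymbol{\theta}}\;\phi(\bm{z}_1)\phi(\bm{z}_2)\,d\bm{z}_1 d\bm{z}_2 .
\end{align}
By \eqref{eq:Pz} in Theorem~\ref{thm:mixture}, the density $F_{\boldsymbol{\theta}}\phi\phi/Z$ is exactly $P(\bm{z}_1,\bm{z}_2)$, because its normalizer is $Z$ by Corollary~\ref{cor:Z_exp}. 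The display above is therefore the claimed expectation.

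The main obstacle is justifying the interchange of $\nabla_{\boldsymbol{\theta}}$ and the integral. I would apply dominated convergence on a compact neighbourhood $\Theta_0$ of $\boldsymbol{\theta}$.

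First, bound $F$. The entries of $\bm{C}$ lie in $(0,1]$, so $\tilde{\bm{L}}\preceq$ (in trace) $\bm{L}$, and $\det(\bm{I}+\tilde{\bm{L}})\le\det(\bm{I}+\mathrm{diag}$-bounded$)\le\prod_i(1+L_{ii})$. This is bounded uniformly on $\Theta_0$ and independent of $\bm{z}$.

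Second, bound the derivative. Write $\nabla F=F\,\mathrm{tr}\big((\bm{I}+\tilde{\bm{L}})^{-1}\nabla\tilde{\bm{L}}\big)$. The derivatives of $\bm{C}^{1/2}$ with respect to $\bm{U}$ and $\epsilon_S$ produce polynomial factors such as $(\bm{u}_i^\top\bm{z}_\ell)\bm{z}_\ell\, e^{-(\cdot)}$. These are bounded by polynomials in $\|\bm{z}_\ell\|$, and in fact they are bounded outright, since $x e^{-x^2/(4\epsilon_S)}$ is bounded. Since $(\bm{I}+\tilde{\bm{L}})^{-1}\preceq\bm{I}$, the integrand's gradient is dominated by $c\,(1+\|\bm{z}_1\|^2+\|\bm{z}_2\|^2)\phi(\bm{z}_1)\phi(\bm{z}_2)$, which is integrable. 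The derivatives with respect to $\boldsymbol{\alpha}$ and $\bm{V}$ are bounded without any $\bm{z}$-dependence.

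This domination yields differentiability of $Z$ and the interchange. Combined with $Z>0$, the result follows.

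One caveat: if the row-norm constraints $\|\bm{u}_i\|^2=1-\epsilon_S$ are imposed through a reparameterization, the gradient is simply taken with respect to the unconstrained parameters, and the chain rule applies identically on both sides.
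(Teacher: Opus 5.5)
Your proposal is correct and follows the paper's proof: you write $Z$ as the Gaussian expectation from Corollary~\ref{cor:Z_exp}, differentiate under the integral, use $\nabla w = w\,\nabla\log w$, and divide by $Z$ to recognize $P(\bm{z}_1,\bm{z}_2)$. Your dominated-convergence justification of the interchange is something the paper omits; two small fixes there: bound $F$ by Hadamard's inequality, $\det(\bm{I}+\tilde{\bm{L}})\le\prod_i(1+\bm{C}(\bm{z})_{ii}L_{ii})\le\prod_i(1+L_{ii})$, since a Loewner comparison $\tilde{\bm{L}}\preceq\bm{L}$ does not hold in general, and note that the $\bm{U}$-derivatives grow linearly in $\|\bm{z}_\ell\|$ rather than being bounded outright, which your polynomial-times-Gaussian dominating function already covers.
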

\begin{proof}
    Let us denote $w_{\boldsymbol{\theta}}(\bm{z}_1, \bm{z}_2) \coloneq \det (\bm{I} + \tilde{\bm{L}}(\bm{z}_1, \bm{z}_2))$.
    From Corollary \ref{cor:Z_exp}, we have
    \begin{align}
        \nabla_{\boldsymbol{\theta}} Z(\boldsymbol{\theta}) =& \mathbb{E}_{\bm{z}_1, \bm{z}_2 \sim \phi}[\nabla_{\boldsymbol{\theta}}w_{\boldsymbol{\theta}}(\bm{z}_1, \bm{z}_2)] \\
        =& \mathbb{E}_{\bm{z}_1, \bm{z}_2 \sim \phi}[w_{\boldsymbol{\theta}}(\bm{z}_1, \bm{z}_2)\nabla_{\boldsymbol{\theta}} \log w_{\boldsymbol{\theta}}(\bm{z}_1, \bm{z}_2)].
    \end{align}
    Dividing this by $Z(\boldsymbol{\theta}) = \mathbb{E}_{\bm{z}_1, \bm{z}_2 \sim \phi}[w_{\boldsymbol{\theta}}(\bm{z}_1, \bm{z}_2)]$, we obtain
    \begin{align}
        &\nabla_{\boldsymbol{\theta}} \log Z(\boldsymbol{\theta})
        = \frac{\nabla_{\boldsymbol{\theta}} Z(\boldsymbol{\theta})}{Z(\boldsymbol{\theta})} \\
        &= \iint \nabla_{\boldsymbol{\theta}} \log w_{\boldsymbol{\theta}}(\bm{z}_1, \bm{z}_2) \frac{w_{\boldsymbol{\theta}}(\bm{z}_1, \bm{z}_2)\prod^2_{\ell=1} \phi(\bm{z}_\ell) d \bm{z}_\ell}{\iint w_{\boldsymbol{\theta}}(\bm{z}_1, \bm{z}_2)\prod^2_{\ell=1} \phi(\bm{z}_\ell) d \bm{z}_\ell}\\
        &= \mathbb{E}_{\bm{z}_1, \bm{z}_2 \sim P(\bm{z}_1, \bm{z}_2)}[\nabla_{\boldsymbol{\theta}} \log w_{\boldsymbol{\theta}}(\bm{z}_1, \bm{z}_2)],
    \end{align}
    which completes the proof.
\end{proof}
Using this corollary, we can estimate the gradient and train the parameters of a DRPP. The detailed algorithm is given in Appendix \ref{app:detail_algorithm}.

\subsection{DRPP with Fixed Cardinality}
\label{sec:fixed_cardinality}
For DPPs, there are many cases in which fixing the subset cardinality $k \in \Omega$ is more appropriate than allowing a variable subset size.
A DPP with a fixed cardinality $k$ is called a $k$-DPP, and the normalizing constant is given by $e_k(\bm{L})$, which is the $k$th order elementary symmetric polynomial of the eigenvalues of $\bm{L}$ \citep{kulesza2011}. We similarly refer to DRPPs with a fixed cardinality $k$ as $k$-DRPPs, to which the properties of $k$-DPPs naturally apply.

Let us denote the joint distribution corresponding to \eqref{eq:drpp_joint_1} with fixed $k = |A|$ as $P_k(A, \bm{z}_1, \bm{z}_2)$.
Regarding Theorem \ref{thm:mixture}, while the joint distribution $P_k(A, \bm{z}_1, \bm{z}_2)$ shares the same form as \eqref{eq:drpp_joint_2} up to a normalizing constant,
the marginal distribution over $(\bm{z}_1, \bm{z}_2)$ becomes
\begin{align}
    \label{eq:Pz_k}
    P(\bm{z}_1, \bm{z}_2; k) \propto e_k(\tilde{\bm{L}}(\bm{z}_1, \bm{z}_2)) \prod^2_{\ell=1}\phi(\bm{z}_\ell)
\end{align}
instead of \eqref{eq:Pz}.
For the block Gibbs sampler, the conditional distribution of $A$ becomes $P(A | \bm{z}_1, \bm{z}_2; k) = k\mbox{-}\mathrm{DPP}(\tilde{\bm{L}}(\bm{z}_1, \bm{z}_2))$ via \eqref{eq:conditional_A}.
These updates realize the ESS-based two-step sampler for $k$-DRPPs as well.

Corollaries \ref{cor:Z_exp} and \ref{cor:dZ} can be similarly adapted for $k$-DRPPs.
Letting $Z_k$ denote the normalizing constant of a $k$-DRPP, we have
\begin{align}
    Z_k &= \mathbb{E}_{\bm{z}_1, \bm{z}_2 \sim \phi}[e_k(\tilde{\bm{L}}(\bm{z}_1, \bm{z}_2))],\\
    \nabla \log Z_k &= \mathbb{E}_{\bm{z}_1, \bm{z}_2 \sim P(\bm{z}_1, \bm{z}_2; k)}[\nabla \log e_k(\tilde{\bm{L}}(\bm{z}_1, \bm{z}_2))].
\end{align}
Although $\nabla \log e_k(\tilde{\bm{L}}(\bm{z}_1, \bm{z}_2))$ has no tractable closed form, we can evaluate it via automatic differentiation. See Appendix \ref{app:detail_algorithm} for the detailed calculation.

\section{Comparison of DRPP and NDPP}
\label{sec:comp_drpp_ndpp}
Nonsymmetric DPPs (NDPPs) are existing models with a motivation very similar to ours \citep{gartrell2019a}.
NDPPs admit nonsymmetric kernel matrices for the DPP likelihood \eqref{eq:dpp}, thereby enabling the coexistence of attraction and repulsion.
To quantify the difference in behavior between DRPPs and NDPPs, we measure the pairwise interaction between items $i,j\in\Omega$ by
\begin{equation}
\Gamma_{ij}\coloneq\frac{P(\{i,j\})\,P(\emptyset)}{P(\{i\})\,P(\{j\})}=\frac{\tilde{P}(\{i,j\})\,\tilde{P}(\emptyset)}{\tilde{P}(\{i\})\,\tilde{P}(\{j\})}.
\label{eq:pairwise_interaction}
\end{equation}
We call $i$ and $j$ attractive if $\Gamma_{ij}>1$, repulsive if $\Gamma_{ij}<1$, and neutral if $\Gamma_{ij}=1$. We can calculate $\Gamma_{ij}$ using the unnormalized weights of $\emptyset$, $\{i\}$, $\{j\}$ and $\{i,j\}$, as the normalizing constants cancel out.
For a DRPP, $\Gamma_{ij}$ is represented as a ratio of repulsive and attractive terms (as shown in Lemma~\ref{lem:drpp_pairwise} of Appendix~A):
\begin{equation}
    \Gamma_{ij}=\frac{1-(\rho^K_{ij})^2}{1-(\rho^S_{ij})^2},
    \label{eq:drpp_pair}
\end{equation}
where $\rho^K_{ij}$ and $\rho^S_{ij}$ are the $(i,j)$-components of $\bm{R}_K$ and $\bm{R}_S$, respectively, corresponding to the correlation coefficients under $\bm{K}$ and $\bm{S}$. Note that $\Gamma_{ij}$ is independent of $\bm{Q}$.

For an NDPP, the nonsymmetric matrix $\bm{L}$ is uniquely decomposed as $\bm{L}=\bm{H}+\bm{A}$ with a symmetric matrix $\bm{H}$ and a skew-symmetric matrix $\bm{A}$, yielding $\Gamma_{ij}=1-(H_{ij}^2-A_{ij}^2)/(H_{ii}H_{jj})$ (see Lemma~\ref{lem:ndpp_pairwise} of Appendix~A).
These matrices are typically learned with the low-rank parameterizations $\bm{H}=\bm{F}\bm{F}^\top$ and $\bm{A}=\bm{B}\bm{C}^\top-\bm{C}\bm{B}^\top$, where $\bm{F}\in\mathbb{R}^{N\times r_H}$ and $\bm{B},\bm{C}\in\mathbb{R}^{N\times r_A}$~\citep{gartrell2019a}.
While attraction can take various forms, we focus on two simple structures, clustered attraction and pair-specific attraction, to highlight the differences in the properties of NDPP and DRPP. 

\paragraph{Clustered attraction.}
Suppose that $\Omega$ is partitioned into $C$ nonempty clusters, $\Omega=\Omega_1\sqcup\cdots\sqcup\Omega_C$ with $|\Omega_c|=m_c$, and that the pairwise interactions under the data-generating distribution are given by
\begin{equation}
\Gamma^\ast_{ij}=
\begin{cases}
\gamma_+ & \text{if } i,j\in\Omega_c \text{ for some } c,\\
\gamma_- & \text{if } i\in\Omega_c,\ j\in\Omega_{c'},\ c\neq c',
\end{cases}
\end{equation}
where $0<\gamma_-\leq 1<\gamma_+$.
This corresponds to a uniform block structure with attraction within clusters and repulsion or neutrality between clusters.
\begin{theorem}
\label{thm:block}
Let $\kappa=\sqrt{1-\gamma_-}$, and assume the same repulsion component for both models, $\bm{K}=\bm{H}=(1-\kappa)\bm{I}+\kappa\mathbf{1}\mathbf{1}^\top$.
Then the following hold.

\textup{(i)} There exists a DRPP with $\bm{U}\in\mathbb{R}^{N\times C}$ (i.e., $r_S=C$) that satisfies $\Gamma_{ij}=\Gamma^\ast_{ij}$ for all $i\neq j$.

\textup{(ii)} Any NDPP with $\bm{B},\bm{C}\in\mathbb{R}^{N\times r_A}$ that satisfies $\Gamma_{ij}=\Gamma^\ast_{ij}$ for all $i\neq j$ requires
\begin{equation}
    r_A\geq\sum_{c=1}^{C}\left\lfloor\frac{m_c}{2}\right\rfloor\geq\frac{N-C}{2}.
\end{equation}

\end{theorem}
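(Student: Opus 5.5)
For part (i), I would construct $\bm{U}$ explicitly with one column per cluster and then check the pairwise interactions using \eqref{eq:drpp_pair}. Since $\bm{K}=(1-\kappa)\bm{I}+\kappa\mathbf{1}\mathbf{1}^\top$ already has unit diagonal, $\bm{R}_K=\bm{K}$ and $\bm{Q}=\bm{I}$, so $\rho^K_{ij}=\kappa$ for all $i\neq j$. Hence $1-(\rho^K_{ij})^2=1-\kappa^2=\gamma_-$.

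Set $\bm{u}_i=\sqrt{1-\epsilon_S}\,\bm{e}_c$ for $i\in\Omega_c$, where $\bm{e}_c$ is the $c$th standard basis vector of $\mathbb{R}^C$. Then $\rho^S_{ij}=1-\epsilon_S$ if $i,j$ share a cluster, and $\rho^S_{ij}=0$ otherwise. Between clusters this gives $\Gamma_{ij}=\gamma_-/1=\gamma_-$. Within a cluster we need $\gamma_-/(1-(1-\epsilon_S)^2)=\gamma_+$, that is, $(1-\epsilon_S)^2=1-\gamma_-/\gamma_+$. Because $0<\gamma_-/\gamma_+<1$, this is solved by $\epsilon_S=1-\sqrt{1-\gamma_-/\gamma_+}\in(0,1)$. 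This $\bm{R}_S=\epsilon_S\bm{I}+\bm{U}\bm{U}^\top$ is positive definite with unit diagonal and has $r_S=C$, which proves (i).

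For part (ii), I would use the NDPP formula $\Gamma_{ij}=1-(H_{ij}^2-A_{ij}^2)/(H_{ii}H_{jj})$ from Lemma~\ref{lem:ndpp_pairwise}. Here $H_{ii}=1$ and $H_{ij}=\kappa$, so $\Gamma_{ij}=1-\kappa^2+A_{ij}^2=\gamma_-+A_{ij}^2$. Matching $\Gamma^\ast$ forces $A_{ij}=0$ between clusters and $A_{ij}^2=\gamma_+-\gamma_->0$ within clusters. So $\bm{A}$ is block diagonal, with each block $\bm{A}_c\in\mathbb{R}^{m_c\times m_c}$ skew-symmetric with all off-diagonal entries equal to $\pm a$, where $a=\sqrt{\gamma_+-\gamma_-}\neq0$.

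The rank of a skew-symmetric matrix is even, and block diagonality gives $\mathrm{rank}\,\bm{A}=\sum_c\mathrm{rank}\,\bm{A}_c$. Also $\bm{A}=\bm{B}\bm{C}^\top-\bm{C}\bm{B}^\top=[\bm{B}\ \bm{C}]\,[\bm{C}\ {-\bm{B}}]^\top$ has rank at most $2r_A$. It therefore suffices to show $\mathrm{rank}\,\bm{A}_c\geq 2\lfloor m_c/2\rfloor$, i.e., $\bm{A}_c$ is nonsingular when $m_c$ is even and has rank $m_c-1$ when $m_c$ is odd.

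This is the main obstacle, because the signs are arbitrary. I would use a mod-2 argument. For even $m$, $\det\bm{A}_c=\mathrm{Pf}(\bm{A}_c)^2$, and $\mathrm{Pf}(\bm{A}_c)/a^{m/2}$ is a signed sum over perfect matchings of $K_m$ with $\pm1$ terms. The number of perfect matchings is $(m-1)!!$, which is odd, so the Pfaffian is nonzero. For odd $m$, deleting one row and the corresponding column leaves an even-order principal submatrix of the same form, which is nonsingular, so the rank is at least $m-1$.

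Combining these, $2r_A\geq\sum_c 2\lfloor m_c/2\rfloor$. Finally, $\lfloor m_c/2\rfloor\geq(m_c-1)/2$ summed over clusters gives $\sum_c\lfloor m_c/2\rfloor\geq(N-C)/2$.
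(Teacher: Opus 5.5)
Your proposal is correct and follows the paper's proof essentially step for step. In (i), your choice $\epsilon_S=1-\sqrt{1-\gamma_-/\gamma_+}$ with $\bm{u}_i=\sqrt{1-\epsilon_S}\,\bm{e}_{c}$ is exactly the paper's $\bm{S}=(1-\sigma)\bm{I}+\sigma\bm{Z}\bm{Z}^\top$. In (ii), you use the same reduction to $|A_{ij}|=\sqrt{\gamma_+-\gamma_-}$ within clusters, the same odd-count-of-perfect-matchings Pfaffian argument, and the same row-and-column deletion argument for odd $m_c$; your explicit factorization $\bm{A}=[\bm{B}\ \bm{C}]\,[\bm{C}\ {-\bm{B}}]^\top$ also justifies $\mathrm{rank}\,\bm{A}\le 2r_A$, which the paper only asserts.
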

\begin{proof}
The proof is given in Appendix~A.
\end{proof}
Thus, for a fixed number of clusters $C$, a DRPP realizes $\Gamma^\ast$ with rank $C$, whereas the rank required for the skew-symmetric part of an NDPP grows linearly with $N$. Note that this comparison fixes the symmetric part of the NDPP to the shared repulsion component. The attraction in this setting is transitive: if $i$ attracts $j$ and $j$ attracts $k$, then $i$, $j$, and $k$ belong to the same cluster, and thus $i$ also attracts $k$.

\paragraph{Pair-specific attraction.}
We next examine the two models in settings where attraction is not necessarily transitive.
Let $G=(\Omega,E)$ be a simple undirected graph representing attraction, and assign a strength $\gamma_{ij}>1$ to each edge $(i,j)\in E$.
The target pairwise interactions are defined as
\begin{equation}
\Gamma^\ast_{ij}=
\begin{cases}
\gamma_{ij} & \text{if } (i,j)\in E,\\
1 & \text{if } (i,j)\notin E
\end{cases}
\label{eq:pairwise_attraction}
\end{equation}

\begin{theorem}
\label{thm:skew}
For $\Gamma^\ast$ in \eqref{eq:pairwise_attraction}, the following hold.

\textup{(i)} For any graph $G$ and any strengths $\{\gamma_{ij}\}_{(i,j)\in E}$, there exists an NDPP with $\bm{L}=\bm{I}+\bm{A}$ that satisfies $\Gamma_{ij}=\Gamma^\ast_{ij}$ for all $i\neq j$.

\textup{(ii)} Let $i,j,k\in\Omega$ be distinct items with $(i,j),(j,k)\in E$ and $(i,k)\notin E$. If a DRPP satisfies $\rho^S_{ij},\rho^S_{jk}\geq0$, $\Gamma_{ij}\geq\gamma$, $\Gamma_{jk}\geq\gamma$, and $|\rho^K_{ik}|\leq\delta$ for some $\gamma>2$ and $\delta\geq0$, then
\begin{equation}
\Gamma_{ik}\geq\frac{\gamma^2}{4(\gamma-1)}\left(1-\delta^2\right).
\label{eq:endpoint_bound}
\end{equation}
\end{theorem}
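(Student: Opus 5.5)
For part (i) we give an explicit construction, using the NDPP pairwise formula $\Gamma_{ij}=1-(H_{ij}^2-A_{ij}^2)/(H_{ii}H_{jj})$ from Lemma~\ref{lem:ndpp_pairwise}. With $\bm{L}=\bm{I}+\bm{A}$ we have $\bm{H}=\bm{I}$, so $H_{ii}=1$ and $H_{ij}=0$ for $i\neq j$. The formula therefore reduces to $\Gamma_{ij}=1+A_{ij}^2$. We define the skew-symmetric matrix $\bm{A}$ by setting $A_{ij}=\sqrt{\gamma_{ij}-1}$ and $A_{ji}=-A_{ij}$ for each edge $(i,j)\in E$, with $i<j$, and setting $A_{ij}=0$ for non-edges. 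Then $\Gamma_{ij}=\gamma_{ij}$ on edges and $\Gamma_{ij}=1$ off edges, as required. Finally, we note that $\bm{L}=\bm{I}+\bm{A}$ yields a valid NDPP. Its symmetric part is $\bm{I}\succ 0$, so every principal minor $\det(\bm{I}[B]+\bm{A}[B])$ is positive. This follows because the eigenvalues of a real skew-symmetric matrix are purely imaginary and come in conjugate pairs, so $\det(\bm{I}+\bm{A}[B])=\prod(1+\lambda^2)\geq 1$ with $\lambda$ ranging over the real magnitudes.

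For part (ii) we use the DRPP formula \eqref{eq:drpp_pair}, $\Gamma_{ij}=(1-(\rho^K_{ij})^2)/(1-(\rho^S_{ij})^2)$. Since $1-(\rho^K)^2\le 1$, the hypothesis $\Gamma_{ij}\geq\gamma$ forces $1-(\rho^S_{ij})^2\le 1/\gamma$. Combined with $\rho^S_{ij}\ge 0$, this gives $\rho^S_{ij}\geq a\coloneq\sqrt{1-1/\gamma}$, and the same bound holds for $\rho^S_{jk}$. The key step is transitivity of correlations, which comes from the positive semidefiniteness of the $3\times3$ unit-diagonal principal submatrix $\bm{R}_S[\{i,j,k\}]$. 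Write $\bm{R}_S$ as the Gram matrix of unit vectors $\bm{b}_i,\bm{b}_j,\bm{b}_k$. The angles between them satisfy the triangle inequality, so $\angle(\bm{b}_i,\bm{b}_k)\le\angle(\bm{b}_i,\bm{b}_j)+\angle(\bm{b}_j,\bm{b}_k)\le 2\arccos a$. When $\gamma>2$ we have $a^2>1/2$, so $2\arccos a<\pi/2$. Since cosine is decreasing on $[0,\pi]$, it follows that $\rho^S_{ik}\geq\cos(2\arccos a)=2a^2-1=1-2/\gamma>0$. Equivalently, this bound can be read off from the nonnegativity of the $3\times 3$ determinant, $\rho_{ik}\ge \rho_{ij}\rho_{jk}-\sqrt{(1-\rho_{ij}^2)(1-\rho_{jk}^2)}$, combined with monotonicity in $\rho_{ij},\rho_{jk}$.

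We then conclude directly. From $\rho^S_{ik}\ge 1-2/\gamma$ we get
\[
1-(\rho^S_{ik})^2\le 1-(1-2/\gamma)^2=\frac{4(\gamma-1)}{\gamma^2},
\]
and together with $1-(\rho^K_{ik})^2\ge 1-\delta^2$ this gives $\Gamma_{ik}\ge \frac{\gamma^2}{4(\gamma-1)}(1-\delta^2)$. The step that needs the most care is the lower bound on $\rho^S_{ik}$. It requires the monotonicity argument to be valid, which is why the assumptions $\rho^S_{ij},\rho^S_{jk}\ge 0$ and $\gamma>2$ are needed: they place the angles in the regime where the cosine bound is positive and the inequality can be squared. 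Everything else is direct substitution.
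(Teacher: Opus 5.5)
Your proposal is correct and follows essentially the same route as the paper: the same construction $\bm{L}=\bm{I}+\bm{A}$ with $|A_{ij}|=\sqrt{\gamma_{ij}-1}$ on edges for (i), and for (ii) the same chain $\rho^S_{ij},\rho^S_{jk}\geq\sqrt{1-1/\gamma}$, then $\rho^S_{ik}\geq\cos(\alpha+\beta)\geq 1-2/\gamma$, then substitution into Lemma~\ref{lem:drpp_pairwise}. The differences are cosmetic. You check that the NDPP is well defined directly, via the purely imaginary spectrum of the skew-symmetric principal submatrices, whereas the paper cites $P_0$ admissibility. You also phrase the key step as the spherical triangle inequality for angles, which is exactly what the paper's $3\times 3$ determinant bound encodes.
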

\begin{proof}
    The proof is given in Appendix~A.
\end{proof}

Equation \eqref{eq:endpoint_bound} implies that $\Gamma_{ik}>1$ whenever $\delta<1-2/\gamma$.
Hence, to realize $\Gamma^\ast$ with $\gamma_{ij}=\gamma_{jk}=\gamma>2$ and $\Gamma^\ast_{ik}=1$ using a DRPP, the repulsion kernel must satisfy
\begin{equation}
|\rho^K_{ik}|\geq1-\frac{2}{\gamma}.
\label{eq:cancellation}
\end{equation}
That is, a DRPP can express non-transitive attraction, but only via sufficient cancellation from the repulsion kernel. In particular, when $\bm{K}=\bm{I}$, meaning $\delta=0$, no non-transitive interaction pattern with $\gamma>2$ can be realized.

In summary, a DRPP represents attraction through inner products of low-dimensional latent vectors, making it well suited to attraction arising from shared factors. Conversely, an NDPP can specify the entries of its skew-symmetric matrix pair by pair when its rank is unconstrained, making it better suited to pair-specific attraction. Since real-world data may exhibit both types of structures, our experiments evaluate them separately.

\section{Experimental results}
\label{sec:exp}

\begin{figure*}[th]
    \centering
    \begin{minipage}[t]{0.30\linewidth}
        \centering
        (a) continuous
        \includegraphics[width=\linewidth]{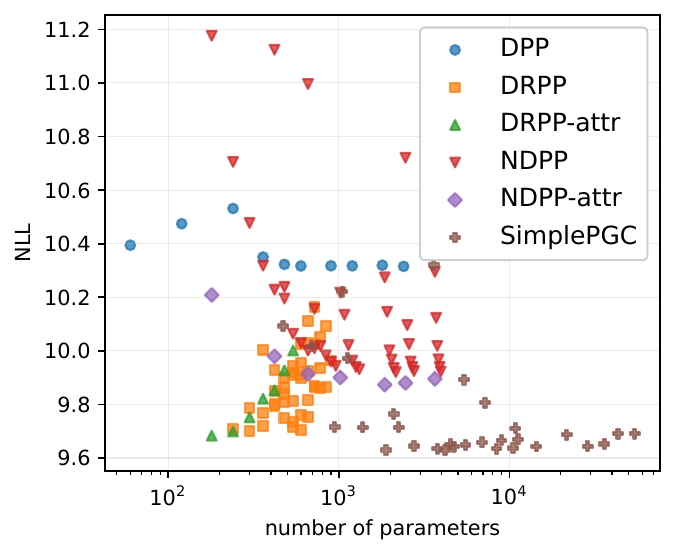}\\
    \end{minipage}\hfill
    \begin{minipage}[t]{0.30\linewidth}
        \centering
        (b) cluster
        \includegraphics[width=\linewidth]{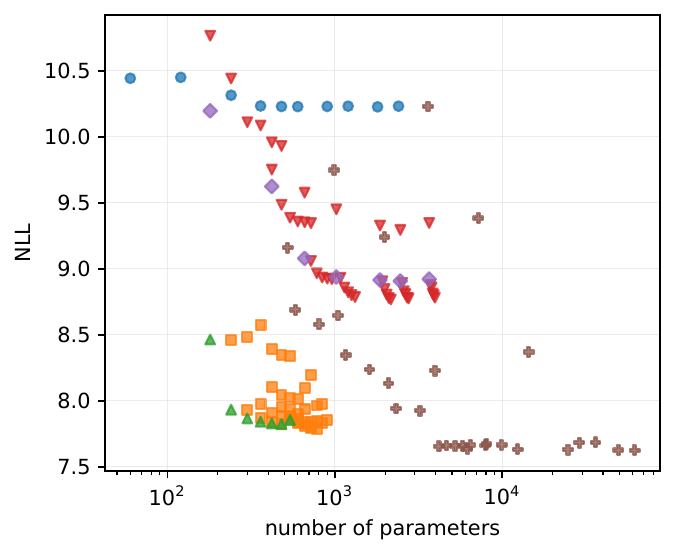}\\
    \end{minipage}\hfill
    \begin{minipage}[t]{0.30\linewidth}
        \centering
        (c) skew
        \includegraphics[width=\linewidth]{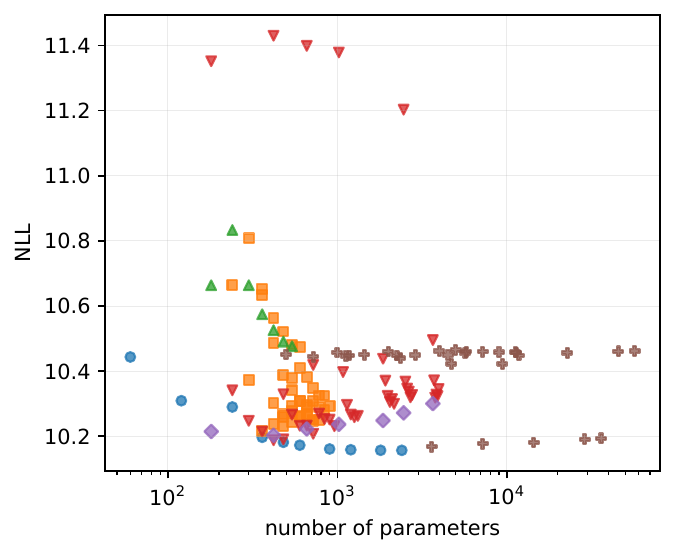}\\
    \end{minipage}
    
    
    \begin{minipage}[t]{0.30\linewidth}
        \centering
        (d) Amazon
        \includegraphics[width=\linewidth]{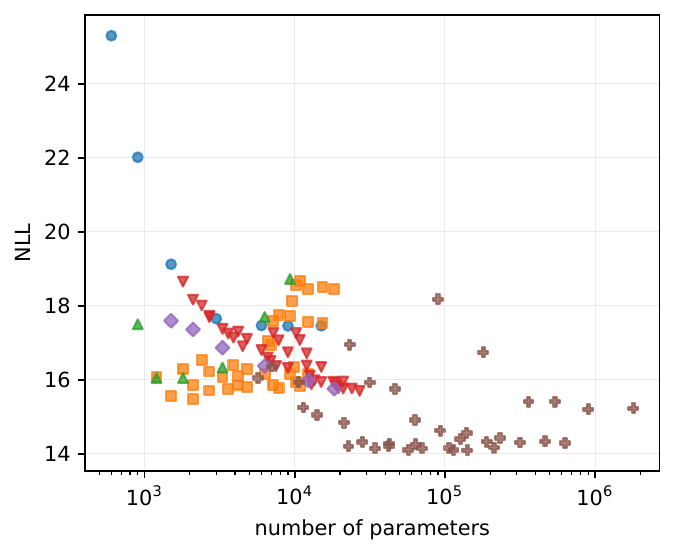}\\
    \end{minipage}\hfill
    \begin{minipage}[t]{0.30\linewidth}
        \centering
        (e) KDD
        \includegraphics[width=\linewidth]{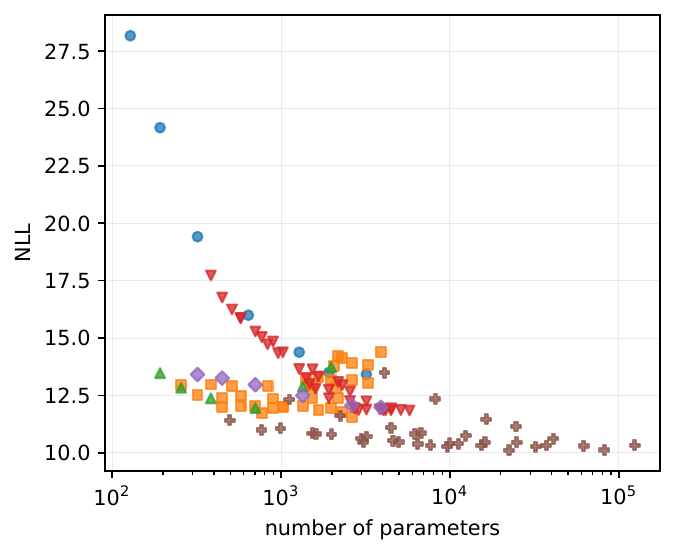}\\
    \end{minipage}\hfill
    \begin{minipage}[t]{0.30\linewidth}
        \centering
        (f) COCO
        \includegraphics[width=\linewidth]{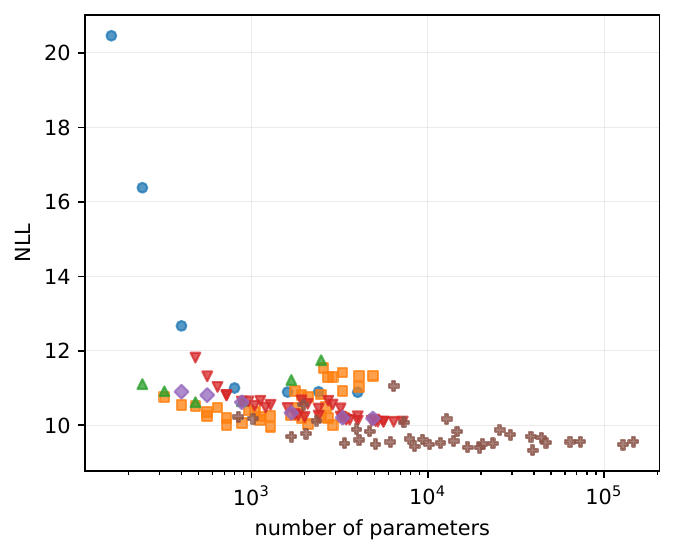}\\
    \end{minipage}
    \caption{Number of parameters vs.\ validation NLL (lower is better) on the synthetic datasets (a)--(c) and the real datasets (d)--(f). Each point corresponds to one hyperparameter setting.}
    \label{fig_complexity}
\end{figure*}

Based on the analysis in Section~\ref{sec:comp_drpp_ndpp}, DRPPs are expected to represent transitive attraction with few parameters, whereas their advantage should diminish for non-transitive attraction, which requires cancellation by the repulsion kernel. In this section, we examine these properties on synthetic and real data.

\subsection{Experimental Setup}
\paragraph{Evaluation and baselines.}
For each method, we evaluate the validation negative log-likelihood (NLL) against the number of parameters while varying the hyperparameters. The number of parameters is defined as the total size of the trainable real-valued arrays. We compare DRPPs against NDPPs, standard DPPs, and SimplePGC. We also include attraction-only variants of DRPPs and NDPPs, in which the repulsion component $\bm{Q}\bm{R}\bm{Q}$ has its correlation matrix $\bm{R}$ fixed to the identity and only the item qualities $\bm{Q}$ are learned (i.e., $\bm{K}=\bm{Q}^2$ for DRPPs and $\bm{H}=\bm{Q}^2$ for NDPPs). We denote them as DRPP-attr and NDPP-attr. The NLL of DRPPs is computed exactly on synthetic data and approximated using the ESS on real data. See Appendix~\ref{app:detail_exp} for candidate hyperparameters.

\paragraph{Datasets.}
We use three types of pairwise interactions on $N=60$ items: cluster, continuous, and skew types (see Appendix~\ref{app:detail_exp} for the definitions). In the cluster type, attraction acts uniformly within each of six clusters, and pairs in different clusters are neutral. In the continuous type, items are placed on a quarter circle, and items with closer angles attract each other more strongly. Both types are transitive and can be represented exactly by the attraction kernel of a DRPP. The skew type is a non-transitive structure over three groups, in which pairs within the same group are neutral and pairs in different groups attract each other. It can be represented by the skew-symmetric part of an NDPP. To avoid the effects of higher-order interactions, we generated data from the pairwise log-linear model $p(A\mid|A|=k)\propto\prod_{i<j\in A}\Gamma_{ij}$ with the subset size fixed to $k=3$. We also generated \num{10000} validation subsets separately from the training data. Since the subset size is fixed, all methods are trained by maximizing the likelihood of the distribution conditioned on the size $k$.

As real data, we use the three-category data from Amazon Baby Registries~\citep{gartrell2019a} (hereafter Amazon) and KDD Cup 2000 (hereafter KDD) from Twenty Datasets~\citep{Haaren2012}, which have been used in previous studies~\citep{gartrell2019a, Zhang2021}, and the object co-occurrence data of Microsoft~COCO~\citep{Lin2014} (hereafter COCO). Amazon, KDD, and COCO exhibit substantial excess transitivity (see Appendix~\ref{app:detail_exp}) and are used to examine whether the parameter efficiency observed in the synthetic transitive settings also appears in real data. As in \citet{Gartrell2016}, we removed subsets with fewer than two items from all real datasets.
In our setting, such subsets carry no information about item interactions and would otherwise dominate the likelihood.

\begin{figure}[th]
    \centering
    \begin{minipage}{0.49\columnwidth}
        \centering
        (a) DRPP-attr\\
        \includegraphics[width=1.0\linewidth]{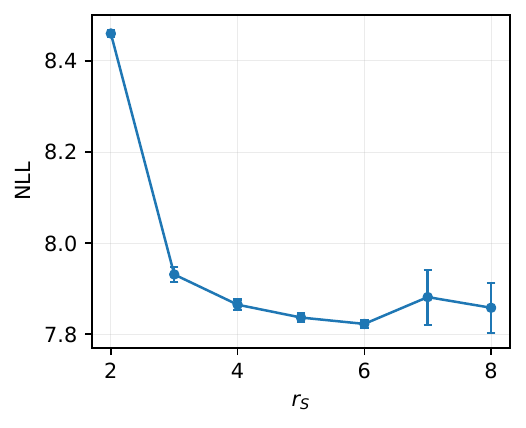}
    \end{minipage}
    \begin{minipage}{0.49\columnwidth}
        \centering
        (b) NDPP-attr\\
        \includegraphics[width=1.0\linewidth]{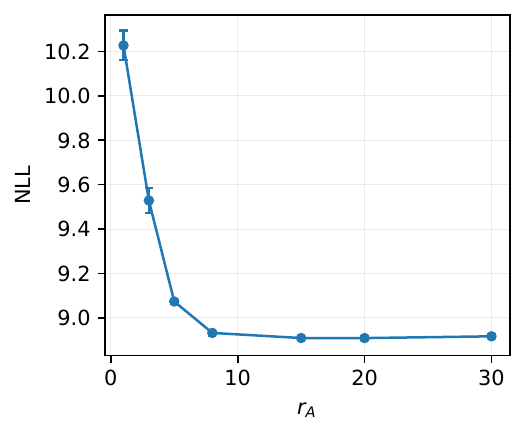}
    \end{minipage}
    \caption{Comparison between the rank of DRPPs and that of NDPPs on cluster dataset. Error bars indicate mean $\pm$ standard deviation over five independent runs.}
    \label{fig_comp_drpp_ndpp}
\end{figure}

\subsection{Parameter Efficiency}
Figure~\ref{fig_complexity} shows the results. On the transitive cluster and continuous types, DRPPs achieve NLL comparable to SimplePGC using several to about ten times fewer parameters, and achieve lower NLL than NDPPs with fewer parameters. In particular, on the cluster type, NDPPs do not match the NLL of DRPPs across any of the hyperparameter settings we explored. Indeed, as shown in Figure~\ref{fig_comp_drpp_ndpp}, when varying $r_S$ of DRPP-attr and $r_A$ of NDPP-attr with the repulsion fixed, the validation NLL of DRPPs is minimized at $r_S=C=6$, equal to the number of clusters. In contrast, the NLL of NDPPs saturates as $r_A$ increases and fails to reach that of DRPPs even beyond $r_A=30$, which Theorem~\ref{thm:block} requires. This is consistent with the prediction of Theorem~\ref{thm:block} that DRPPs can represent the target interactions with a rank equal to the number of clusters, whereas NDPPs cannot do so easily. On the skew type, the clear parameter-efficiency advantage of DRPPs disappears, with the best NLLs of DRPPs, NDPPs, and DPPs are similar. This is consistent with the prediction in Section 4 that the advantage of DRPPs diminishes for non-transitive attraction.

On real data, DRPPs achieve NLL comparable to or better than that of NDPPs with an order of magnitude fewer parameters on Amazon, occupying the low-parameter end of the Pareto front between the number of parameters and NLL. SimplePGC achieves the lowest NLL but requires more than an order of magnitude more parameters. On KDD and COCO, DRPPs achieve lower NLL than NDPPs with fewer parameters, though SimplePGC obtains lower NLL than DRPPs even with a comparable number of parameters.

As a remark, while SimplePGC consistently achieves lower NLL than DRPPs across datasets, this is consistent with the greater flexibility of SimplePGC. While a DRPP is defined as an infinite mixture of DPPs induced by just two kernel matrices, SimplePGC encompasses arbitrary finite mixtures of DPPs as special cases.
Therefore, a well-tuned SimplePGC is best viewed as a flexible reference baseline rather than a direct competitor of DRPPs. 

\section{Conclusion}
\label{sec:conclusion}
We proposed the DRPP, which models repulsion and attraction through the ratio of the determinants of two feature-based symmetric positive-definite kernels. By representing a DRPP as an infinite mixture of DPPs via auxiliary Gaussian variables, we enabled efficient MCMC sampling, tractable estimation of key set statistics, and maximum-likelihood learning. Theoretical and empirical analyses demonstrated that DRPPs represent transitive and clustered attraction with high parameter efficiency compared to NDPPs, whereas non-transitive attraction requires cancellation through the repulsion kernel.

While this strong structural inductive bias enables high parameter efficiency, it also constrains flexibility relative to broader models like SimplePGC. Extending DRPPs by replacing the underlying DPP components with more expressive building blocks (e.g., DetPGC) to capture both transitive and pair-specific attraction while retaining latent-variable inference is an intriguing direction for future work. While we did not address DRPP mode finding in this work, difference-of-submodular optimization \citep{iyer2012} serves as the primary option. However, such methods lack formal approximation guarantees, making the design of algorithms with provable bounds an important open challenge. Other valuable directions include reducing the variance of the ESS-based estimator for large subset sizes and extending our theoretical analysis to general NDPP lower bounds and higher-order interactions.

\subsubsection*{Acknowledgements}
Part of this work is supported by JST CREST JPMJCR26E3, JSPS KAKENHI JP24K15088, JP25H01494, JP26K02989 and JP26K23861.

\bibliography{aistats2027}

\begin{thebibliography}{}

\bibitem[Anari and Vuong, 2022]{anari2022}
Anari, N. and Vuong, T.-D. (2022).
\newblock From {{Sampling}} to {{Optimization}} on {{Discrete Domains}} with {{Applications}} to {{Determinant Maximization}}.
\newblock In {\em Proceedings of {{Thirty Fifth Conference}} on {{Learning Theory}}}, pages 5596--5618. PMLR.

\bibitem[Anscombe, 1956]{Anscombe1956}
Anscombe, F.~J. (1956).
\newblock On estimating binomial response relations.
\newblock {\em Biometrika}, 43(3-4):461--464.

\bibitem[Anzai and Hino, 2026]{anzai2026}
Anzai, S. and Hino, H. (2026).
\newblock Determinantal {{Point Process Approximation}} under {{Positive}} and {{Negative Dependence}}.

\bibitem[Bardenet et~al., 2021]{NEURIPS2021_8744cf92}
Bardenet, R., Ghosh, S., and LIN, M. (2021).
\newblock Determinantal point processes based on orthogonal polynomials for sampling minibatches in sgd.
\newblock In Ranzato, M., Beygelzimer, A., Dauphin, Y., Liang, P., and Vaughan, J.~W., editors, {\em Advances in Neural Information Processing Systems}, volume~34, pages 16226--16237. Curran Associates, Inc.

\bibitem[Biyik et~al., 2019]{Biyik2019BatchAL}
Biyik, E., Wang, K., Anari, N., and Sadigh, D. (2019).
\newblock Batch active learning using determinantal point processes.
\newblock {\em ArXiv}, abs/1906.07975.

\bibitem[Borodin and Rains, 2005]{borodin2005}
Borodin, A. and Rains, E.~M. (2005).
\newblock Eynard--{{Mehta Theorem}}, {{Schur Process}}, and {{Their Pfaffian Analogs}}.
\newblock {\em Journal of Statistical Physics}, 121(3):291--317.

\bibitem[Chen et~al., 2018]{NEURIPS2018_dbbf603f}
Chen, L., Zhang, G., and Zhou, E. (2018).
\newblock Fast greedy map inference for determinantal point process to improve recommendation diversity.
\newblock In Bengio, S., Wallach, H., Larochelle, H., Grauman, K., Cesa-Bianchi, N., and Garnett, R., editors, {\em Advances in Neural Information Processing Systems}, volume~31. Curran Associates, Inc.

\bibitem[Chib, 1995]{chib1995}
Chib, S. (1995).
\newblock Marginal {{Likelihood}} from the {{Gibbs Output}}.
\newblock {\em Journal of the American Statistical Association}, 90(432):1313--1321.

\bibitem[Cunden et~al., 2019]{Cunden2019}
Cunden, F.~D., Majumdar, S.~N., and O’Connell, N. (2019).
\newblock Free fermions and $\alpha$-determinantal processes.
\newblock {\em Journal of Physics A: Mathematical and Theoretical}, 52(16):165202.

\bibitem[Diaconis and Evans, 2000]{Diaconis2000}
Diaconis, P. and Evans, S.~N. (2000).
\newblock Immanants and finite point processes.
\newblock {\em J. Comb. Theory A}, 91:305--321.

\bibitem[Gartrell et~al., 2019]{gartrell2019a}
Gartrell, M., Brunel, V.-E., Dohmatob, E., and Krichene, S. (2019).
\newblock Learning {{Nonsymmetric Determinantal Point Processes}}.
\newblock In {\em Advances in {{Neural Information Processing Systems}}}, volume~32. Curran Associates, Inc.

\bibitem[Gartrell et~al., 2020]{gartrell2020}
Gartrell, M., Han, I., Dohmatob, E., Gillenwater, J., and Brunel, V.-E. (2020).
\newblock Scalable {{Learning}} and {{MAP Inference}} for {{Nonsymmetric Determinantal Point Processes}}.
\newblock In {\em International {{Conference}} on {{Learning Representations}}}.

\bibitem[Gartrell et~al., 2016]{Gartrell2016}
Gartrell, M., Paquet, U., and Koenigstein, N. (2016).
\newblock The bayesian low-rank determinantal point process mixture model.
\newblock {\em ArXiv}, abs/1608.04245.

\bibitem[Haldane, 1956]{Haldane1956}
Haldane, B. J. B.~S. (1956).
\newblock The estimation and significance of the logarithm of a ratio of frequencies.
\newblock {\em Annals of Human Genetics}, 20.

\bibitem[Han et~al., 2022]{han2022}
Han, I., Gartrell, M., Dohmatob, E., and Karbasi, A. (2022).
\newblock Scalable {{MCMC Sampling}} for {{Nonsymmetric Determinantal Point Processes}}.
\newblock In {\em Proceedings of the 39th {{International Conference}} on {{Machine Learning}}}, pages 8213--8229. PMLR.

\bibitem[Iyer and Bilmes, 2012]{iyer2012}
Iyer, R. and Bilmes, J. (2012).
\newblock Algorithms for approximate minimization of the difference between submodular functions, with applications.
\newblock In {\em Proceedings of the {{Twenty-Eighth Conference}} on {{Uncertainty}} in {{Artificial Intelligence}}}, {{UAI}}'12, pages 407--417. AUAI Press.

\bibitem[Katori and Shirai, 2022]{Katori2022}
Katori, M. and Shirai, T. (2022).
\newblock Zeros of the i.i.d. gaussian laurent series on an annulus: Weighted szeg{\H o}kernels and permanental-determinantal point processes.
\newblock {\em Communications in Mathematical Physics}, 392(3):1099--1151.

\bibitem[Kawashima and Hino, 2025]{kawashima2025}
Kawashima, T. and Hino, H. (2025).
\newblock A {{Family}} of {{Distributions}} of {{Random Subsets}} for {{Controlling Positive}} and {{Negative Dependence}}.
\newblock In {\em Proceedings of {{The}} 28th {{International Conference}} on {{Artificial Intelligence}} and {{Statistics}}}, pages 64--72. PMLR.

\bibitem[Kulesza and Taskar, 2011a]{kulesza2011}
Kulesza, A. and Taskar, B. (2011a).
\newblock K-{{Dpps}}: {{Fixed-Size Determinantal Point Processes}}.
\newblock In {\em International {{Conference}} on {{Machine Learning}}}.

\bibitem[Kulesza and Taskar, 2011b]{10.5555/3020548.3020597}
Kulesza, A. and Taskar, B. (2011b).
\newblock Learning determinantal point processes.
\newblock In {\em Proceedings of the Twenty-Seventh Conference on Uncertainty in Artificial Intelligence}, UAI'11, page 419^^e2^^80^^93427, Arlington, Virginia, USA. AUAI Press.

\bibitem[Kulesza and Taskar, 2012]{Kulesza2012}
Kulesza, A. and Taskar, B. (2012).
\newblock Determinantal {{Point Processes}} for {{Machine Learning}}.
\newblock {\em Foundations and Trends\textregistered{} in Machine Learning}, 5(2-3):123--286.

\bibitem[Lin et~al., 2014]{Lin2014}
Lin, T.-Y., Maire, M., Belongie, S., Hays, J., Perona, P., Ramanan, D., Doll{\'a}r, P., and Zitnick, C.~L. (2014).
\newblock Microsoft coco: Common objects in context.
\newblock In Fleet, D., Pajdla, T., Schiele, B., and Tuytelaars, T., editors, {\em Computer Vision -- ECCV 2014}, pages 740--755, Cham. Springer International Publishing.

\bibitem[Macchi, 1975]{Macchi1975}
Macchi, O. (1975).
\newblock The {{Coincidence Approach}} to {{Stochastic Point Processes}}.
\newblock {\em Advances in Applied Probability}, 7(1):83--122.

\bibitem[Murray et~al., 2010]{Murray2010}
Murray, I., Adams, R., and MacKay, D. (2010).
\newblock Elliptical slice sampling.
\newblock In {\em Proceedings of the {{Thirteenth International Conference}} on {{Artificial Intelligence}} and {{Statistics}}}, pages 541--548. {JMLR Workshop and Conference Proceedings}.

\bibitem[Newman, 2003]{Newman2003}
Newman, M. E.~J. (2003).
\newblock The structure and function of complex networks.
\newblock {\em SIAM Review}, 45(2):167--256.

\bibitem[Poinas, 2026]{poinas2026}
Poinas, A. (2026).
\newblock On determinantal point processes with nonsymmetric kernels.
\newblock {\em Electronic Journal of Probability}, 31:1--32.

\bibitem[Poulson, 2020]{poulson2020}
Poulson, J. (2020).
\newblock High-performance sampling of generic determinantal point processes.
\newblock {\em Philosophical Transactions of the Royal Society A: Mathematical, Physical and Engineering Sciences}, 378(2166):20190059.

\bibitem[Shah and Ghahramani, 2013]{Shah2013}
Shah, A. and Ghahramani, Z. (2013).
\newblock Determinantal clustering process - a nonparametric bayesian approach to kernel based semi-supervised clustering.
\newblock In {\em Proceedings of the Twenty-Ninth Conference on Uncertainty in Artificial Intelligence}, UAI'13, page 566^^e2^^80^^93575, Arlington, Virginia, USA. AUAI Press.

\bibitem[Shirai and Takahashi, 2003]{Shirai2003}
Shirai, T. and Takahashi, Y. (2003).
\newblock Random point fields associated with certain fredholm determinants i: fermion, poisson and boson point processes.
\newblock {\em Journal of Functional Analysis}, 205(2):414--463.

\bibitem[Valiant, 1979]{Valiant1979}
Valiant, L. (1979).
\newblock The complexity of computing the permanent.
\newblock {\em Theoretical Computer Science}, 8(2):189--201.

\bibitem[Van~Haaren and Davis, 2012]{Haaren2012}
Van~Haaren, J. and Davis, J. (2012).
\newblock Markov network structure learning: a randomized feature generation approach.
\newblock In {\em Proceedings of the Twenty-Sixth AAAI Conference on Artificial Intelligence}, AAAI'12, page 1148^^e2^^80^^931154. AAAI Press.

\bibitem[Zhang et~al., 2021]{Zhang2021}
Zhang, H., Juba, B., and Van Den~Broeck, G. (2021).
\newblock Probabilistic generating circuits.
\newblock In Meila, M. and Zhang, T., editors, {\em Proceedings of the 38th International Conference on Machine Learning}, volume 139 of {\em Proceedings of Machine Learning Research}, pages 12447--12457. PMLR.

\end{thebibliography}
\bibliographystyle{apalike}

\clearpage
\appendix
\thispagestyle{empty}

\onecolumn
\aistatstitle{Generalized Determinantal Point Processes for Attraction and Repulsion with Distinct Interaction Kernels: \\
Supplementary Materials}
\label{app:proof_thm}
\section{Proofs of theorems}
The pairwise interaction of a DRPP is expressed as the ratio of the correlations of the repulsive and attractive kernels as follows.
\begin{lemma}
Let $\bm{K},\bm{S}\succ0$, and let the correlations of $\bm{K}[\{i, j\}]$ and $\bm{S}[\{i, j\}]$ be
$\rho^K_{ij}=k_{ij}/\sqrt{k_{ii}k_{jj}}$ and $\rho^S_{ij}=s_{ij}/\sqrt{s_{ii}s_{jj}}$, respectively.
Then, for any $i\neq j$,
\begin{equation}
    \Gamma_{ij}=\frac{1-(\rho^K_{ij})^2}{1-(\rho^S_{ij})^2}.
\end{equation}
\label{lem:drpp_pairwise}
\end{lemma}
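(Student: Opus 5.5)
The plan is a direct computation from the definition of $\Gamma_{ij}$ in \eqref{eq:pairwise_interaction}, using the unnormalized weights \eqref{eq:drpp}. Since the normalizing constant cancels, I only need $\tilde{P}$ evaluated at the four sets $\emptyset$, $\{i\}$, $\{j\}$ and $\{i,j\}$. By the convention in the definition, $\tilde{P}(\emptyset)=1$.

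For the singletons, the principal submatrices are $1\times 1$, so
\begin{equation}
\tilde{P}(\{i\})=k_{ii}/s_{ii}, \qquad \tilde{P}(\{j\})=k_{jj}/s_{jj}.
\end{equation}
For the pair, the $2\times2$ determinants give
\begin{equation}
\tilde{P}(\{i,j\})=\frac{k_{ii}k_{jj}-k_{ij}^2}{s_{ii}s_{jj}-s_{ij}^2}.
\end{equation}

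Next I would factor $k_{ii}k_{jj}$ out of the numerator and $s_{ii}s_{jj}$ out of the denominator. Using the definitions $\rho^K_{ij}=k_{ij}/\sqrt{k_{ii}k_{jj}}$ and $\rho^S_{ij}=s_{ij}/\sqrt{s_{ii}s_{jj}}$, this gives
\begin{equation}
\tilde{P}(\{i,j\})=\frac{k_{ii}k_{jj}}{s_{ii}s_{jj}}\cdot\frac{1-(\rho^K_{ij})^2}{1-(\rho^S_{ij})^2}.
\end{equation}
Multiplying by $\tilde{P}(\emptyset)=1$ and dividing by the product of the singleton weights cancels the prefactor $k_{ii}k_{jj}/(s_{ii}s_{jj})$ exactly, which yields the claimed formula.

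All divisions are legitimate. Positive definiteness of $\bm{K}$ and $\bm{S}$ gives positive diagonals, so the square roots defining $\rho^K_{ij}$ and $\rho^S_{ij}$ make sense. It also gives positive $2\times2$ principal minors, so $|\rho^S_{ij}|<1$ and the denominator $1-(\rho^S_{ij})^2$ is nonzero.

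There is no real obstacle here: it is a two-line computation, and the only care needed is this bookkeeping. As a remark, the formula makes visible that $\Gamma_{ij}$ is invariant under diagonal rescaling, and in particular independent of $\bm{Q}$, as stated in the main text.
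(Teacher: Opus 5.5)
Your proposal is correct and matches the paper's proof: you compute $\tilde{P}(\emptyset)=1$, $\tilde{P}(\{i\})=k_{ii}/s_{ii}$ and $\tilde{P}(\{i,j\})$ as a ratio of $2\times 2$ determinants, use positive definiteness to justify the divisions, and cancel the prefactor $k_{ii}k_{jj}/(s_{ii}s_{jj})$. This is exactly the computation the paper carries out.
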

\begin{proof}
    By~\eqref{eq:pairwise_interaction}, $\Gamma_{ij}$ is determined only by the unnormalized weights $\tilde{P}$, and $\tilde{P}(\emptyset)=1$ by definition. From the definition of the DRPP~\eqref{eq:drpp}, the unnormalized weights are
    \begin{equation}
        \tilde{P}(\{i\})=\frac{k_{ii}}{s_{ii}},\qquad
        \tilde{P}(\{i,j\})=\frac{\det(\bm{K}[\{i,j\}])}{\det(\bm{S}[\{i,j\}])}=\frac{k_{ii}k_{jj}-k_{ij}^2}{s_{ii}s_{jj}-s_{ij}^2}.
    \end{equation}
    Here, $k_{ii},s_{ii}>0$ since $\bm{K},\bm{S}\succ0$, and $s_{ii}s_{jj}-s_{ij}^2>0$ since $\bm{S}[\{i,j\}]\succ0$. Thus, we obtain
    \begin{equation}
        \Gamma_{ij}
        =\frac{\tilde{P}(\{i,j\})\,\tilde{P}(\emptyset)}{\tilde{P}(\{i\})\,\tilde{P}(\{j\})}
        =\frac{k_{ii}k_{jj}-k_{ij}^2}{s_{ii}s_{jj}-s_{ij}^2}\cdot\frac{s_{ii}s_{jj}}{k_{ii}k_{jj}}
        =\frac{1-k_{ij}^2/(k_{ii}k_{jj})}{1-s_{ij}^2/(s_{ii}s_{jj})}
        =\frac{1-(\rho^K_{ij})^2}{1-(\rho^S_{ij})^2}.
    \end{equation}
\end{proof}

By the lemma, $\Gamma_{ij}$ becomes smaller as the items $i$ and $j$ become more similar in the sense of $\bm{K}$, and larger as they become more similar in the sense of $\bm{S}$.
That is, the contributions of repulsion and attraction appear separately as a ratio.
In particular, in the standard form, $\bm{K}=\bm{Q}\bm{R}_K\bm{Q}$ with $\mathrm{diag}(\bm{R}_K)=\boldsymbol{1}$, so $\bm{Q}$ does not affect $\Gamma_{ij}$.
In other words, the quality changes how likely each item is to be selected but does not affect the interactions between items, consistent with the insight in \citep{anzai2026}.
Moreover, since $\rho^K_{ij}=\langle \bm{v}_i,\bm{v}_j\rangle$ and $\rho^S_{ij}=\langle \bm{u}_i,\bm{u}_j\rangle$ for $i \neq j$, we can write
\begin{equation}
    \Gamma_{ij}=\frac{1-\langle \bm{v}_i,\bm{v}_j\rangle^2}{1-\langle \bm{u}_i,\bm{u}_j\rangle^2}.
\end{equation}

On the other hand, the pairwise interaction of an NDPP can be expressed in terms of $\bm{H}$ and $\bm{A}$ as follows.
\begin{lemma}
Let $\bm{H}^\top=\bm{H}$ and $\bm{A}^\top=-\bm{A}$, and suppose that $H_{ii}>0$ for all $i\in\Omega$. Then, for an NDPP and any $i\neq j$,
\begin{equation}
    \Gamma_{ij}=1 - \frac{H^2_{ij}-A^2_{ij}}{H_{ii}H_{jj}}.
\end{equation}
\label{lem:ndpp_pairwise}
\end{lemma}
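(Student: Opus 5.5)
Since $\Gamma_{ij}$ in \eqref{eq:pairwise_interaction} depends only on unnormalized weights, the plan mirrors the proof of Lemma~\ref{lem:drpp_pairwise}. For an NDPP with kernel $\bm{L}=\bm{H}+\bm{A}$, the unnormalized weight of $A$ is $\det(\bm{L}[A])$, with $\det(\bm{L}[\emptyset])=1$. The normalizing constant $\det(\bm{L}+\bm{I})$ cancels in the ratio, exactly as it does for the DRPP. We therefore only need the principal minors of orders one and two.

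\emph{Singletons.} Because $\bm{A}$ is skew-symmetric, its diagonal vanishes: $A_{ii}=-A_{ii}$ forces $A_{ii}=0$. Hence $\tilde{P}(\{i\})=L_{ii}=H_{ii}$, which is positive by assumption. This guarantees that the denominator of $\Gamma_{ij}$ is nonzero.

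\emph{Pairs.} For the $2\times2$ principal minor we use $L_{ij}=H_{ij}+A_{ij}$ and $L_{ji}=H_{ji}+A_{ji}=H_{ij}-A_{ij}$. This gives
\begin{equation}
\det(\bm{L}[\{i,j\}])=H_{ii}H_{jj}-(H_{ij}+A_{ij})(H_{ij}-A_{ij})=H_{ii}H_{jj}-H_{ij}^2+A_{ij}^2.
\end{equation}

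\emph{Conclusion.} Dividing by $\tilde{P}(\{i\})\tilde{P}(\{j\})=H_{ii}H_{jj}$ and multiplying by $\tilde{P}(\emptyset)=1$ yields $\Gamma_{ij}=1-(H_{ij}^2-A_{ij}^2)/(H_{ii}H_{jj})$, as claimed.

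The argument is routine; the only real obstacle is well-definedness. The definition of $\Gamma_{ij}$ in \eqref{eq:pairwise_interaction} uses the normalized probabilities. For the normalized form to coincide with the unnormalized one, the NDPP has to be a genuine distribution, meaning all principal minors are nonnegative and $\det(\bm{L}+\bm{I})>0$. I would note that this holds in the standard NDPP setting, where $\bm{H}\succeq0$ and all principal minors of $\bm{L}$ are nonnegative. Even without that, the right-hand ratio of unnormalized weights is well defined as long as $H_{ii}>0$, and this unnormalized ratio is what the identity should be read as.
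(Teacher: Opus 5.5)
Your proposal is correct and matches the paper's proof step for step. Like the paper, you use $A_{ii}=0$ and $A_{ji}=-A_{ij}$ to get $\tilde{P}(\{i\})=H_{ii}$ and $\tilde{P}(\{i,j\})=H_{ii}H_{jj}-H_{ij}^2+A_{ij}^2$, then substitute these into the unnormalized form of \eqref{eq:pairwise_interaction}; your well-definedness remark is a harmless addition that the paper leaves implicit (the paper writes the subset as $Y$, which avoids your clash with the skew-symmetric $\bm{A}$).
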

\begin{proof}
    The unnormalized weight of an NDPP is $\tilde{P}(Y)=\det(\bm{L}[Y])$ for $Y\subseteq\Omega$ with $\det(\bm{L}[\emptyset])=1$, where $\bm{L}=\bm{H}+\bm{A}$.
    Since $\bm{A}^\top=-\bm{A}$, we have $A_{ii}=0$ and $A_{ji}=-A_{ij}$, and since $\bm{H}^\top=\bm{H}$, we have $H_{ji}=H_{ij}$. Therefore, $L_{ii}=H_{ii}$ and
    \begin{equation}
        \bm{L}[\{i,j\}]=
        \begin{pmatrix}
            H_{ii} & H_{ij}+A_{ij}\\
            H_{ij}-A_{ij} & H_{jj}
        \end{pmatrix}.
    \end{equation}
    Hence,
    \begin{equation}
        \tilde{P}(\{i\})=H_{ii},\qquad
        \tilde{P}(\{i,j\})=H_{ii}H_{jj}-(H_{ij}+A_{ij})(H_{ij}-A_{ij})=H_{ii}H_{jj}-H_{ij}^2+A_{ij}^2.
    \end{equation}
    Substituting these into~\eqref{eq:pairwise_interaction} and using $H_{ii}>0$, we obtain
    \begin{equation}
        \Gamma_{ij}=\frac{H_{ii}H_{jj}-H_{ij}^2+A_{ij}^2}{H_{ii}H_{jj}}
        =1-\frac{H_{ij}^2-A_{ij}^2}{H_{ii}H_{jj}}.
    \end{equation}
\end{proof}

\subsection{Proof of Theorem~\ref{thm:block}}
Let $\bm{Z}\in \{0, 1\}^{N \times C}$ be the cluster membership matrix, that is, $Z_{ic}=\mathds{1}\{i \in \Omega_c\}$. We also define $\kappa=\sqrt{1-\gamma_-}$ and $\sigma=\sqrt{1-\gamma_-/\gamma_+}$, and set
\begin{align}
    \bm{K} =& (1-\kappa) \bm{I} + \kappa \mathbf{1}\mathbf{1}^\top, \\
    \bm{S} =& (1-\sigma) \bm{I} + \sigma \bm{Z} \bm{Z}^\top.
\end{align}
Then all the diagonal entries of $\bm{K}$ and $\bm{S}$ are equal to one. For $i$ and $j$ in different clusters, we have $K_{ij}=\kappa$ and $S_{ij}=0$. Denoting the pairwise interaction of the DRPP by $\Gamma^{\mathrm{DRPP}}_{ij}$, we obtain
\begin{equation}
    \Gamma^{\mathrm{DRPP}}_{ij}=1 - \kappa^2 = \gamma_-.
\end{equation}
On the other hand, for $i$ and $j$ in the same cluster, we have $K_{ij}=\kappa$ and $S_{ij}=\sigma$, which yields
\begin{equation}
    \Gamma^{\mathrm{DRPP}}_{ij}=\frac{1 - \kappa^2}{1 - \sigma^2} = \frac{\gamma_-}{\gamma_- / \gamma_+} = \gamma_+.
\end{equation}
Hence, $\Gamma^\mathrm{DRPP}_{ij}=\Gamma^\ast_{ij}$ holds exactly. In this case, $\bm{S}$ is in the standard form with $\epsilon_S=1-\sigma$ and $\bm{U}=\sqrt{\sigma}\bm{Z}\in\mathbb{R}^{N\times C}$, that is, $r_S=C$.

Denoting the pairwise interaction of the NDPP by $\Gamma^{\mathrm{NDPP}}_{ij}$, since $H_{ii}=1$ and $H_{ij}=\kappa$ for the NDPP, Lemma~\ref{lem:ndpp_pairwise} gives
\begin{equation}
    \Gamma^{\mathrm{NDPP}}_{ij}=1-\kappa^2+A^2_{ij}=\gamma_- + A^2_{ij}.
\end{equation}
Then, for $i$ and $j$ in different clusters, $\Gamma^\ast_{ij}=\gamma_-$ implies $A_{ij}=0$. For $i$ and $j$ in the same cluster, $\Gamma^\ast_{ij}=\gamma_+$ implies
\begin{equation}
    |A_{ij}|=a_0, \qquad a_0\coloneq\sqrt{\gamma_+-\gamma_-}>0.
\end{equation}
Therefore, $\bm{A}$ is a block-diagonal skew-symmetric matrix with one block per cluster, and each block $\bm{A}_c\coloneq\bm{A}[\Omega_c]$ satisfies
\begin{equation}
    |(\bm{A}_c)_{ij}| = a_0 \qquad (i \neq j).
\end{equation}

Let $m\coloneq m_c$ and $\bm{T}\coloneq\bm{A}_c/a_0$. Then $\bm{T}$ is skew-symmetric, and all its off-diagonal entries are $\pm1$. Suppose that $m$ is even. For a skew-symmetric matrix of even order,
\begin{equation}
    \det\bm{T}=\mathrm{Pf}(\bm{T})^2
\end{equation}
holds. The Pfaffian $\mathrm{Pf}(\bm{T})$ is a sum over all perfect matchings of $m$ vertices, i.e., all partitions of $\{1,\dots,m\}$ into $m/2$ disjoint pairs, and each term of the sum is $\pm1$. The number of perfect matchings of $m$ vertices is $(m-1)!!=1\cdot3\cdots(m-1)$, which is odd as a product of odd numbers. Thus, $\mathrm{Pf}(\bm{T})$ is a sum of an odd number of $\pm1$'s and cannot be even, so that $\mathrm{Pf}(\bm{T})\neq0$. Therefore, $\det\bm{T}\neq0$, that is,
\begin{equation}
    \mathrm{rank}(\bm{A}_c)=m\qquad(\text{$m$ even}).
\end{equation}

Suppose next that $m$ is odd. The $(m-1)\times(m-1)$ principal submatrix obtained by removing any one item has the same form as above, and since $m-1$ is even, it has full rank. Hence, $\mathrm{rank}(\bm{A}_c)\geq m-1$. On the other hand, a skew-symmetric matrix of odd order is always singular, and its rank is even, so it is at most $m-1$. Therefore,
\begin{equation}
    \mathrm{rank}(\bm{A}_c)=m-1\qquad(\text{$m$ odd}).
\end{equation}
Therefore, $\mathrm{rank}(\bm{A}_c)=2\lfloor m_c/2\rfloor$ in both cases. Since $\bm{A}$ is block diagonal,
\begin{equation}
    \mathrm{rank}(\bm{A})=\sum_{c=1}^{C}\mathrm{rank}(\bm{A}_c)=2\sum_{c=1}^{C}\left\lfloor\frac{m_c}{2}\right\rfloor .
\end{equation}

Since $\mathrm{rank}(\bm{A})\leq 2 r_A$, we have
\begin{equation}
    2 r_A \geq 2\sum_{c=1}^{C}\left\lfloor\frac{m_c}{2}\right\rfloor,
\end{equation}
that is, $r_A\geq\sum_{c=1}^{C}\lfloor m_c/2\rfloor$. Moreover, since $\lfloor m_c/2\rfloor\geq (m_c - 1)/2$, we obtain
\begin{equation}
    r_A \geq \sum_{c=1}^{C}\frac{m_c-1}{2}=\frac{N-C}{2}.
\end{equation}

\subsection{Proof of Theorem~\ref{thm:skew}}
\textup{(i)} Since $\bm{L}+\bm{L}^\top=2\bm{I}\succ0$, $\bm{L}$ is an admissible $P_0$ matrix~\citep{gartrell2019a}, and $H_{ii}=1$ and $H_{ij}=0$, Lemma~\ref{lem:ndpp_pairwise} gives $\Gamma_{ij}=1+A_{ij}^2$. Therefore, setting
\begin{align}
    |A_{ij}|=&\sqrt{\gamma_{ij}-1}\ \ ((i,j)\in E), \\
    A_{ij}=&0\ \ ((i,j)\notin E),\qquad A_{ji}=-A_{ij},
\end{align}
we obtain $\Gamma_{ij}=\gamma_{ij}$ on the edges and $\Gamma_{ij}=1$ on the non-edges.

\textup{(ii)}
By Lemma~\ref{lem:drpp_pairwise}, $\Gamma_{ij}=(1-(\rho^K_{ij})^2)/(1-(\rho^S_{ij})^2)\geq\gamma$. Since $1-(\rho^K_{ij})^2\leq1$, we have $1-(\rho^S_{ij})^2\leq1/\gamma$, and since $\rho^S_{ij}\geq0$, we obtain
\begin{equation}
    \rho^S_{ij}\geq r,\qquad r\coloneq\sqrt{1-\frac{1}{\gamma}}.
\end{equation}
Similarly, $\rho^S_{jk}\geq r$. Since $\bm{S}\succ0$, its principal submatrices are also positive definite, and hence the determinant of the correlation matrix corresponding to $\{i,j,k\}$,
\begin{equation}
\det\bm{R}_S[\{i,j,k\}]=
\begin{vmatrix}
1 & \rho^S_{ij} & \rho^S_{ik}\\
\rho^S_{ij} & 1 & \rho^S_{jk}\\
\rho^S_{ik} & \rho^S_{jk} & 1
\end{vmatrix}
\end{equation}
is nonnegative. Rearranging this condition with respect to $\rho^S_{ik}$ gives $(\rho^S_{ik}-\rho^S_{ij}\rho^S_{jk})^2\leq(1-(\rho^S_{ij})^2)(1-(\rho^S_{jk})^2)$, and thus
\begin{equation}
    \rho^S_{ik}\geq \rho^S_{ij}\rho^S_{jk}-\sqrt{(1-(\rho^S_{ij})^2)(1-(\rho^S_{jk})^2)}.
\end{equation}
Let $\rho^S_{ij}=\cos\alpha$ and $\rho^S_{jk}=\cos\beta$. Since $\gamma>2$ implies $r>1/\sqrt2$, we have $0\leq\alpha,\beta\leq\arccos r<\pi/4$, and hence $\alpha+\beta<\pi/2$. Therefore,
\begin{equation}
    \rho^S_{ik}\geq\cos\alpha\cos\beta-\sin\alpha\sin\beta=\cos(\alpha+\beta)
    \geq\cos(2\arccos r)=2r^2-1=1-\frac{2}{\gamma}.
\end{equation}
Since $1-2/\gamma>0$ for $\gamma>2$,
\begin{equation}
    1-(\rho^S_{ik})^2\leq1-\left(1-\frac{2}{\gamma}\right)^2=\frac{4(\gamma-1)}{\gamma^2}.
\end{equation}
Moreover, $|\rho^K_{ik}|\leq\delta$ implies $1-(\rho^K_{ik})^2\geq1-\delta^2$, and applying Lemma~\ref{lem:drpp_pairwise} again yields~\eqref{eq:endpoint_bound}. 
Finally, realizing $\Gamma_{ik}\leq1$ requires $\frac{\gamma^2}{4(\gamma-1)}(1-\delta^2)\leq1$, which is equivalent to $\delta^2\geq(1-2/\gamma)^2$. Since $\delta\geq0$, we obtain~\eqref{eq:cancellation}.

\section{Details of learning DRPPs}
\label{app:detail_algorithm}
Throughout this section, we write $\bm{z}=(\bm{z}_1,\bm{z}_2)$ for simplicity and use the notation $\tilde{\bm{L}}(\bm{z})\coloneq\tilde{\bm{L}}(\bm{z}_1,\bm{z}_2)$, $\bm{C}(\bm{z})\coloneq\bm{C}(\bm{z}_1,\bm{z}_2)$, $P(\bm{z})\coloneq P(\bm{z}_1,\bm{z}_2)$, and $P_k(\bm{z})\coloneq P(\bm{z}_1,\bm{z}_2;k)$.

\begin{algorithm}[t]
  \caption{Maximum likelihood estimation of DRPP}
  \label{alg:app_mle}
  \KwIn{Observations $\{A_m\}_{m=1}^{M}$, ranks $r_S, r_K$, $\epsilon_S, \epsilon_K$, size $k$ (optionally), number of chains $T$, minibatch size $B$}
  \KwOut{$\boldsymbol{\theta} = (\boldsymbol{\alpha}, \bm{V}, \bm{U})$}
  Initialize $\boldsymbol{\theta}$ and chains $\{\bm{z}^{(t)}\}_{t=1}^{T}$\;
  \Repeat{convergence}{
    \For{$t = 1$ \KwTo $T$}{
      \eIf{$k$ specified}{
        Update $\bm{z}^{(t)}$ by ESS targeting $P_k(\bm{z})$ \eqref{eq:Pz_k}\;
      }{
        Update $\bm{z}^{(t)}$ by ESS targeting $P(\bm{z})$ \eqref{eq:Pz}\;
      }
    }
    Sample a minibatch $\mathcal{B}\subseteq\{1,\dots,M\}$ with $|\mathcal{B}|=B$\;
    Compute $\hat g(\boldsymbol{\theta})$ by \eqref{eq:app_grad_est}\;
    Update $\boldsymbol{\theta}$ with $\hat g(\boldsymbol{\theta})$ by a stochastic gradient method\;
  }
\end{algorithm}

\subsection{Learning Algorithm}
\label{app:grad}
We maximize~\eqref{eq:mle} for data with variable subset sizes, and its size-$k$ conditional counterpart, obtained by replacing $Z(\boldsymbol{\theta})$ with $Z_k(\boldsymbol{\theta})$, for data in which every observation satisfies $|A_m|=k$. By~\eqref{eq:unnormalized_score}, the first term can be computed exactly from determinants of $|A_m|\times|A_m|$ matrices. On the other hand, by Corollary~\ref{cor:dZ} and Section~\ref{sec:fixed_cardinality}, the gradient of the normalizing-constant term is the expectation of $\nabla_{\boldsymbol{\theta}}\log w_{\boldsymbol{\theta}}(\bm{z})$ with respect to $P(\bm{z})$ or $P_k(\bm{z})$, where
\begin{equation}
  w_{\boldsymbol{\theta}}(\bm{z}) \coloneq
  \begin{cases}
    \det(\bm{I} + \tilde{\bm{L}}(\bm{z})) & \text{(variable size)},\\
    e_k(\tilde{\bm{L}}(\bm{z})) & \text{(fixed size)}.
  \end{cases}
  \label{eq:app_w}
\end{equation}
In our experiments, we therefore estimate the gradient using samples $\{\bm{z}^{(t)}\}_{t=1}^{T}$ obtained by ESS and a minibatch $\mathcal{B}\subseteq\{1,\dots,M\}$ of observations as
\begin{equation}
  \hat g(\boldsymbol{\theta})
  = \frac{1}{|\mathcal{B}|} \sum_{m\in\mathcal{B}} \nabla_{\boldsymbol{\theta}} \log \tilde{P}(A_m \mid \boldsymbol{\theta})
    - \frac{1}{T} \sum_{t=1}^{T} \nabla_{\boldsymbol{\theta}} \log w_{\boldsymbol{\theta}}(\bm{z}^{(t)})
    - \nabla_{\boldsymbol{\theta}} \mathcal{R}(\boldsymbol{\theta}),
  \label{eq:app_grad_est}
\end{equation}
where the samples $\bm{z}^{(t)}$ are treated as constants independent of $\boldsymbol{\theta}$. The regularization term $\mathcal{R}(\boldsymbol{\theta})$ is defined as
\begin{equation}
  \mathcal{R}(\boldsymbol{\theta})
  = \frac{\eta_\alpha}{2}\,\|\boldsymbol{\alpha}\|_2^2
  + \frac{\eta_U}{2}\sum_{a\neq b}\bigl(\bm{u}_a^\top\bm{u}_b\bigr)^2
  + \frac{\eta_V}{2}\sum_{a\neq b}\bigl(\bm{v}_a^\top\bm{v}_b\bigr)^2,
  \label{eq:app_reg}
\end{equation}
where $\eta_\alpha, \eta_U, \eta_V \geq 0$ are regularization weights. Algorithm~\ref{alg:app_mle} summarizes the overall procedure.

\subsection{Hyperparameter Selection via Held-out Likelihood}
\label{app:heldout}
We select the hyperparameters by maximizing the log-likelihood of the validation data $\{A_n\}_{n=1}^{n_\mathrm{val}}$. Since the normalizing constant of a DRPP is not available in closed form, we use the representation of the DRPP as an infinite mixture of DPPs given by Theorem~\ref{thm:mixture}. Because $P(A\mid\bm{z})=\det(\tilde{\bm{L}}(\bm{z})[A])/\det(\bm{I}+\tilde{\bm{L}}(\bm{z}))$, we have
\begin{equation}
  P(A)=\int P(A\mid\bm{z})\,P(\bm{z})\,d\bm{z}
  =\mathbb{E}_{\bm{z}\sim P(\bm{z})}\left[\frac{\det(\tilde{\bm{L}}(\bm{z})[A])}{\det(\bm{I}+\tilde{\bm{L}}(\bm{z}))}\right].
\end{equation}
Similarly, in the fixed-size case, $P(A\mid\bm{z};k)=\det(\tilde{\bm{L}}(\bm{z})[A])/e_k(\tilde{\bm{L}}(\bm{z}))$ yields
\begin{equation}
  P(A\mid|A|=k)
  =\mathbb{E}_{\bm{z}\sim P_k(\bm{z})}\left[\frac{\det(\tilde{\bm{L}}(\bm{z})[A])}{e_k(\tilde{\bm{L}}(\bm{z}))}\right].
\end{equation}
We therefore draw $\{\bm{z}^{(t)}\}_{t=1}^{T}$ from $P(\bm{z})$ or $P_k(\bm{z})$ by ESS and, using $w_{\boldsymbol{\theta}}$ in~\eqref{eq:app_w}, estimate the log-likelihood of the validation data as
\begin{equation}
  \sum_{n=1}^{n_\mathrm{val}}\log P(A_n)
  \approx\sum_{n=1}^{n_\mathrm{val}}\log\left\{\frac{1}{T}\sum_{t=1}^{T}\frac{\det(\tilde{\bm{L}}(\bm{z}^{(t)})[A_n])}{w_{\boldsymbol{\theta}}(\bm{z}^{(t)})}\right\}.
  \label{eq:app_heldout}
\end{equation}
This estimator does not require the normalizing constant, and each term can be computed from the determinant of an $|A_n|\times|A_n|$ matrix and $w_{\boldsymbol{\theta}}(\bm{z}^{(t)})$ alone.

\subsection{Efficient Evaluation}
\label{app:lowrank}
Since the shrinkage loop of ESS evaluates $w_{\boldsymbol{\theta}}(\bm{z})$ at every acceptance test, the evaluation of $w_{\boldsymbol{\theta}}(\bm{z})$ dominates the overall computational cost of learning.
When $\bm{R}_K = \bm{I}$, $\bm{L}$ and $\tilde{\bm{L}}(\bm{z})$ are diagonal; $\det(\bm{I} + \tilde{\bm{L}})$ can be computed in $O(N)$ time, and $e_k(\tilde{\bm{L}})$ in $O(Nk)$ time by the standard recursion over the diagonal entries.
When $r_K > 0$, in contrast, $\tilde{\bm{L}}(\bm{z})$ is dense, and a naive computation requires an $O(N^3)$ decomposition.
We therefore exploit the low-rank structure in~\eqref{eq:L_tilde}.
Substituting $\bm{L}=\epsilon_S^{-1}\bm{Q}\bm{R}_K\bm{Q}$ and $\bm{Q}=\mathrm{diag}(e^{\boldsymbol{\alpha}/2})$ into~\eqref{eq:L_tilde}, we can write
\begin{equation}
  \tilde{\bm{L}}(\bm{z}) = \boldsymbol{\Lambda}_{\bm{z}} \bm{R}_K \boldsymbol{\Lambda}_{\bm{z}},
  \qquad
  \boldsymbol{\Lambda}_{\bm{z}} \coloneq \epsilon_S^{-1/2}\,\bm{Q}\,\bm{C}(\bm{z})^{1/2}
  = \mathrm{diag}\bigl(\lambda_i(\bm{z})\bigr)_{i \in \Omega},
  \qquad
  \lambda_i(\bm{z})^2 = \frac{e^{\alpha_i}\, \bm{C}(\bm{z})_{ii}}{\epsilon_S}.
  \label{eq:app_Ltilde}
\end{equation}
Letting $\bm{W} \coloneq \boldsymbol{\Lambda}_{\bm{z}} \bm{V} \in \mathbb{R}^{N \times r_K}$, we decompose $\tilde{\bm{L}}(\bm{z})$ into the sum of a diagonal matrix and a low-rank matrix:
\begin{equation}
  \tilde{\bm{L}}(\bm{z}) = \epsilon_K \boldsymbol{\Lambda}_{\bm{z}}^2 + \bm{W} \bm{W}^\top.
\end{equation}
In the following, we abbreviate $\lambda_i = \lambda_i(\bm{z})$ and denote the $i$th row of $\bm{V}$ by $\bm{v}_i$.

\paragraph{Variable size.}
By the matrix determinant lemma,
\begin{equation}
  \det(\bm{I} + \tilde{\bm{L}})
  = \det(\bm{I} + \epsilon_K \boldsymbol{\Lambda}_{\bm{z}}^2)\,
    \det\left(\bm{I}_{r_K} + \bm{W}^\top (\bm{I} + \epsilon_K \boldsymbol{\Lambda}_{\bm{z}}^2)^{-1} \bm{W}\right).
\end{equation}
The first factor is the determinant of a diagonal matrix and can be computed in $O(N)$ time.
Since the $r_K \times r_K$ matrix in the second factor can be written as
\begin{equation}
  \bm{W}^\top (\bm{I} + \epsilon_K \boldsymbol{\Lambda}_{\bm{z}}^2)^{-1} \bm{W}
  = \sum_{i=1}^{N} \frac{\lambda_i^2}{1 + \epsilon_K \lambda_i^2} \bm{v}_i \bm{v}_i^\top,
\end{equation}
the overall cost is $O(N r_K^2 + r_K^3)$.

\paragraph{Fixed size.}
Although $e_k$ is a symmetric function of the eigenvalues, we avoid the eigendecomposition by using Newton's identities:
\begin{equation}
  k e_k = \sum_{j=1}^{k} (-1)^{j-1} e_{k-j} p_j,
  \qquad e_0 = 1.
\end{equation}
where $p_j\coloneq\operatorname{tr}(\tilde{\bm{L}}^j)$ is the $j$th power sum of the eigenvalues of $\tilde{\bm{L}}$. For example, for $k = 3$,
\begin{equation}
  e_1 = p_1, \qquad
  e_2 = \tfrac{1}{2}\bigl(p_1^2 - p_2\bigr), \qquad
  e_3 = \tfrac{1}{6}\bigl(p_1^3 - 3 p_1 p_2 + 2 p_3\bigr).
\end{equation}
Furthermore, letting $\bm{Y}_a \coloneq \bm{W}^\top \boldsymbol{\Lambda}_{\bm{z}}^{2a} \bm{W} \in \mathbb{R}^{r_K \times r_K}$ and
$\mu_a \coloneq \operatorname{tr}(\boldsymbol{\Lambda}_{\bm{z}}^{2a}) = \sum_i \lambda_i^{2a}$,
expanding $(\epsilon_K \boldsymbol{\Lambda}_{\bm{z}}^2 + \bm{W}\bm{W}^\top)^j$ and using the cyclic property of the trace yield
\begin{align}
  p_1 &= \epsilon_K \mu_1 + \operatorname{tr} \bm{Y}_0, \\
  p_2 &= \epsilon_K^2 \mu_2 + 2\epsilon_K \operatorname{tr} \bm{Y}_1 + \operatorname{tr}(\bm{Y}_0^2), \\
  p_3 &= \epsilon_K^3 \mu_3 + 3\epsilon_K^2 \operatorname{tr} \bm{Y}_2
         + 3\epsilon_K \operatorname{tr}(\bm{Y}_1 \bm{Y}_0) + \operatorname{tr}(\bm{Y}_0^3).
\end{align}
Computing $\bm{Y}_0, \dots, \bm{Y}_{k-1}$ requires $O(k N r_K^2)$ time, and the remaining operations involve only $r_K \times r_K$ matrices. Hence, the evaluation avoids both the combinatorial cost of summing over all $\binom{N}{k}$ subsets and the $O(N^3)$ cost of eigendecomposition. Note, however, that Newton's identities involve alternating sums, so numerical errors due to cancellation grow as $k$ increases.

\section{Details of Experimental Setting}
\label{app:detail_exp}

In this section, we provide details of the synthetic datasets.

\subsection{Synthetic Datasets}
We defined the pairwise interactions of the artificial datasets as follows, where $\theta_i = \frac{\pi}{2}\cdot\frac{i-1}{N-1}$, $c_i \in \{1,\dots,C\}$ denotes the group to which item $i$ belongs, and $\varphi_{c}=2\pi c/C$:
\begin{align}
  \Gamma^{\mathrm{cont}}_{ij} &= \bigl(1 - (1-\varepsilon)^2\cos^2(\theta_i-\theta_j)\bigr)^{-1},
  \\
  \Gamma^{\mathrm{clust}}_{ij} &=
  \begin{cases}
    \bigl(1 - (1-\varepsilon)^2\bigr)^{-1} & (c_i = c_j),\\
    1 & (c_i \ne c_j),
  \end{cases}
  \\
  \Gamma^{\mathrm{skew}}_{ij} &= 1 + a^2\sin^2(\varphi_{c_j}-\varphi_{c_i}).
\end{align}
We set $\varepsilon=0.05$ and $a=2.0$, and used $C=6$ groups for cluster and $C=3$ groups for skew.
For $C=3$, $\sin^2(\varphi_{c_j}-\varphi_{c_i})=3/4$ for any pair of different groups.

The data were generated from the pairwise log-linear model with these pairwise interactions,
\begin{equation}
    p(A \mid |A|=k) \propto \prod_{\{i,j\} \subseteq A}\Gamma_{ij}
    = \exp\Bigl\{\sum_{\{i,j\} \subseteq A}\log \Gamma_{ij}\Bigr\}.
\end{equation}
Since the model before conditioning on the size, $\tilde{p}(A)=\prod_{\{i,j\}\subseteq A}\Gamma_{ij}$, satisfies $\tilde{p}(\emptyset)=\tilde{p}(\{i\})=1$, its pairwise interaction defined in~\eqref{eq:pairwise_interaction} coincides exactly with $\Gamma_{ij}$. Moreover, since this model has no interactions of order three or higher, it prevents any particular method from gaining an advantage through structures other than the pairwise interactions. We fixed the size of the generated subsets to $k=3$. This avoids the generated subsets being polarized toward the empty set and the full set, and keeps the normalizing constant exactly computable even for $N=60$, which enables both exact data generation and exact computation of the negative log-likelihood (NLL) on the validation data. For all synthetic datasets, we generated $n_{\mathrm{train}}=10000$ samples and $n_{\mathrm{val}}=10000$  validation samples.

Among these, the pairwise interactions of continuous and cluster can be represented exactly by the attractive kernel of a DRPP.
When $\bm{R}_K=\bm{I}$, Lemma~\ref{lem:drpp_pairwise} gives
\begin{equation}
    \Gamma_{ij} = \frac{1}{1-(\rho^S_{ij})^2}= \frac{1}{1-\langle \bm{u}_i,\bm{u}_j \rangle^2}.
\end{equation}
For continuous, letting
\begin{equation}
  \bm{u}_i = \sqrt{1-\varepsilon}\,(\cos\theta_i,\ \sin\theta_i),
\end{equation}
we have $\langle\bm{u}_i,\bm{u}_j\rangle=(1-\varepsilon)(\cos\theta_i\cos\theta_j+\sin\theta_i\sin\theta_j)=(1-\varepsilon)\cos(\theta_i-\theta_j)$, and hence
\begin{equation}
    \Gamma_{ij}=\bigl(1 - (1-\varepsilon)^2\cos^2(\theta_i-\theta_j)\bigr)^{-1}=\Gamma^{\mathrm{cont}}_{ij}.
\end{equation}
For cluster, letting
\begin{equation}
  \bm{u}_i = \sqrt{1-\varepsilon}\,\bm{e}_{c_i},
\end{equation}
where $\bm{e}_c\in\mathbb{R}^C$ is the $c$-th standard basis vector, $\langle\bm{u}_i,\bm{u}_j\rangle$ equals $1-\varepsilon$ for items in the same group and $0$ otherwise, and hence $\Gamma_{ij}=\Gamma^{\mathrm{clust}}_{ij}$.
In both cases, $\|\bm{u}_i\|_2^2=1-\varepsilon$, so these constructions are in the standard form with $\epsilon_S=\varepsilon$.
Therefore, a DRPP can represent continuous exactly with $r_S=2$ and cluster exactly with $r_S=C$.
We restricted $\theta_i$ to a quarter circle rather than the full circle because $\Gamma_{ij}$ depends only on $\cos^2(\theta_i-\theta_j)$.
For $|\theta_i-\theta_j|\leq\pi/2$, $\Gamma_{ij}$ decreases monotonically with the angular difference, but it increases again once the difference exceeds $\pi/2$, which would make distant items attract each other.

On the other hand, skew can be represented exactly by the skew-symmetric part of an NDPP.
When $\bm{H}=\bm{I}$, Lemma~\ref{lem:ndpp_pairwise} gives
\begin{equation}
    \Gamma_{ij}=1+A^2_{ij}.
\end{equation}
We associate each group $c$ with $\bm{p}_c = (\cos\varphi_c,\ \sin\varphi_c)$ and let
\begin{equation}
  \bm{A} = \bm{P}\bm{J}\bm{P}^\top,
\end{equation}
where $\bm{P} \in \mathbb{R}^{N \times 2}$ is the matrix whose $i$th row is $\sqrt{a}\,\bm{p}_{c_i}$ and $\bm{J} = \begin{psmallmatrix} 0 & 1 \\ -1 & 0\end{psmallmatrix}$.
Then $\bm{A}$ is skew-symmetric and $A_{ij} = a \sin(\varphi_{c_j} - \varphi_{c_i})$, so that $\Gamma_{ij}=\Gamma^{\mathrm{skew}}_{ij}$.
Letting $\bm{B}$ and $\bm{C}$ be the first and second columns of $\bm{P}$, respectively, we have $\bm{A}=\bm{B}\bm{C}^\top-\bm{C}\bm{B}^\top$; hence, an NDPP can represent skew exactly with $r_A=1$.

\subsection{Real Datasets}

We use three real datasets: Amazon Baby Registries~\citep{gartrell2019a}, KDD Cup 2000 from the Twenty Datasets~\citep{Haaren2012}, and Microsoft COCO~\citep{Lin2014}. Since our focus is on interactions between items, we remove subsets with fewer than two items from every dataset; such subsets carry no information about item interactions and would otherwise dominate the likelihood (they account for more than 90\% of the raw KDD data, whereas the Amazon data contain none). The same preprocessing is also adopted by \citet{Gartrell2016} and others, albeit for different purposes.

We use the distributed training, validation, and test splits for KDD, and a random $70/15/15$ split with a fixed seed for Amazon and COCO, which have no distributed splits. Since our goal is to analyze how the NLL changes with the number of parameters as the hyperparameters vary, we fit the parameters on the training set and evaluate every hyperparameter setting by its NLL on the validation set; the test sets are not used for model fitting or NLL evaluation.. To reduce the cost of the ESS-based likelihood of DRPP, we use the first $1000$ validation subsets of each dataset. Table~\ref{tab:real_data} shows the summary of the real datasets.

\paragraph{Measuring Transitivity}
For real datasets, we measure transitivity on the graph whose edges connect pairs of items with attraction. For each pair of items $(i,j)$, let $n_{11}$, $n_{10}$, $n_{01}$, and $n_{00}$ denote the numbers of subsets that contain both $i$ and $j$, only $i$, only $j$, and neither, respectively, counted over the whole dataset (training, validation, and test subsets after removing subsets with fewer than two items). We compute the log odds ratio with the Haldane--Anscombe correction~\citep{Anscombe1956,Haldane1956},
\begin{equation}
    \tilde{\Psi}_{ij}=\log\frac{(n_{11}+1/2)(n_{00}+1/2)}{(n_{10}+1/2)(n_{01}+1/2)},
\end{equation}
where adding $1/2$ to each cell keeps $\tilde{\Psi}_{ij}$ finite even when a cell is empty, and define the adjacency matrix $\bm{G}$ by $G_{ij}=\mathbbm{1}\{\tilde{\Psi}_{ij}>0\}$ for $i\neq j$ and $G_{ii}=0$. Letting $d_i=\sum_j G_{ij}$ be the degree of item $i$, we measure the transitivity of $\bm{G}$ by the clustering coefficient~\citep{Newman2003}
\begin{equation}
    T_\mathrm{trans}(\bm{G})=\frac{\mathrm{Tr}(\bm{G}^3)}{2W},\qquad W=\sum_i\binom{d_i}{2},
\end{equation}
that is, the fraction of paths of length two that are closed into triangles. Since $T_\mathrm{trans}$ increases with the density and the degree heterogeneity of $\bm{G}$, we use its difference from the average over $B$ degree-preserving random graphs with adjacency matrices $\bm{G}^\mathrm{null}_b$ ($b=1,\dots,B$):
\begin{equation}
    T_\mathrm{excess}(\bm{G})=T_\mathrm{trans}(\bm{G})-\frac{1}{B}\sum_{b=1}^{B}T_\mathrm{trans}(\bm{G}^\mathrm{null}_b).
\end{equation}
Each $\bm{G}^\mathrm{null}_b$ is generated from $\bm{G}$ by double-edge swaps: two edges $(i,j)$ and $(k,l)$ are chosen uniformly at random and replaced by $(i,l)$ and $(k,j)$ when the four items are distinct and neither new pair is already an edge. This rewiring preserves every degree $d_i$ and creates neither self-loops nor multiple edges; we repeat it until $20|E|$ swaps have succeeded, where $|E|$ is the number of edges of $\bm{G}$, and use $B=100$. $T_\mathrm{excess}=0$ indicates that $\bm{G}$ has no transitivity beyond what its degrees explain. The values of $T_\mathrm{excess}$ for all datasets are listed in Table~\ref{tab:real_data}.

\begin{table*}[t]
\centering
\caption{Real datasets: numbers of subsets before (raw) and after removing subsets with fewer than two items, and transitivity. }
\label{tab:real_data}
\begin{tabular}{lrlrrrrrrr}
\toprule
 & & & \multicolumn{3}{c}{Raw} & \multicolumn{3}{c}{$|A|\ge 2$} & \\
\cmidrule(lr){4-6}\cmidrule(lr){7-9}
Dataset & $N$ & Split & Train & Valid & Test & Train & Valid & Test & $T_\mathrm{excess}$ \\
\midrule
Amazon & 300 & 70/15/15 & 21{,}852 & 4{,}682 & 4{,}684 & 21{,}852 & 4{,}682 & 4{,}684 & $+0.582$ \\
COCO & 80 & 70/15/15 & 82{,}800 & 17{,}743 & 17{,}744 & 65{,}155 & 13{,}962 & 13{,}963 & $+0.290$ \\
KDD & 64 & given & 180{,}092 & 19{,}907 & 34{,}955 & 15{,}841 & 1{,}778 & 2{,}668 & $+0.208$ \\
\bottomrule
\end{tabular}
\end{table*}

\subsection{Hyperparameter Setting}
We evaluate the relationship between the number of parameters and the validation NLL of each method. To this end, we vary only the hyperparameters that determine the number of parameters, namely the ranks of the low-rank factors for DRPPs, NDPPs, and DPPs, and the maximum group size (denoted by $K$) and the number of components (denoted by $C$) for SimplePGC, as listed in Table~\ref{tab:hyper}. 

All other hyperparameters are fixed. For DRPPs, we set $\epsilon_S=\epsilon_K=0.03$ and the regularization weights in $R(\theta)$ to $\eta_\alpha=\eta_U=\eta_V=10^{-3}$. For the baselines, we used their original regularization terms with the same weight $10^{-3}$. For SimplePGC, we used the parameters at the epoch with the best validation NLL.

\begin{table*}[t]
\centering
\caption{Hyperparameter candidates. Synthetic: Cluster, Continuous, Skew; Real: Amazon, KDD, COCO.}
\label{tab:hyper}
\begin{tabular}{llll}
\toprule
Method & Hyperparameter & Synthetic & Real \\
\midrule
DRPP-attr & $r_S$ & 2, 3, 4, 5, 6, 7, 8 & 2, 3, 5, 10, 20, 30 \\
DRPP & $r_S$ & 2, 3, 4, 5, 6, 7, 8 & 2, 3, 5, 10, 20, 30 \\
 
NDPP-attr & $r_A$ & 1, 3, 5, 8, 15, 20, 30 & 2, 3, 5, 10, 20, 30 \\
NDPP & $r_H$ & 1, 2, 3, 4, 5, 6 & 2, 3, 5, 10, 20, 30 \\
 & $r_A$ & 1, 3, 5, 8, 15, 20, 30 & 2, 3, 5, 10, 20, 30 \\
DPP & $r$ & 1, 2, 4, 6, 8, 10, 15, 20, 30, 40 & 2, 3, 5, 10, 20, 30, 50 \\
SimplePGC & $K$ & 1, 2, 4, 6, 7, 10 & 1, 2, 4, 6, 8, 10 \\
 & $C$ & 1, 2, 4, 8, 10 & 1, 2, 4, 6, 10, 20 \\
\bottomrule
\end{tabular}
\end{table*}

 \subsection{Implementation of Baselines}
We reimplemented the baselines in JAX: SimplePGC based on the official implementation of \citet{Zhang2021} (\url{https://github.com/joshuacnf/Probabilistic-Generating-Circuits}) and NDPP following the official implementation of \citet{gartrell2019a} (\url{https://github.com/cgartrel/nonsymmetric-DPP-learning}). The models and training objectives follow the originals; our changes only speed up the computation, except that early stopping of SimplePGC uses the same validation set as the other methods instead of a held-out part of the training set.

\end{document}